\documentclass[twoside,11pt]{article}

%

\usepackage{jmlr2e}

\usepackage{amsmath}
\usepackage{algorithm, algorithmic}
\usepackage[tight]{subfigure}
\usepackage{color}
\usepackage{wrapfig}


\newcommand{\Rbb}{\mathbb{R}}

\newcommand{\bs}{\boldsymbol}
\DeclareMathOperator{\Tr}{Tr}
\DeclareMathOperator{\diag}{\mathbf{diag}}
\DeclareMathOperator*{\argmin}{arg\,min}
\DeclareMathOperator*{\argmax}{arg\,max}





\ShortHeadings{Learning to Match via Inverse Optimal Transport}{Li, Ye, Zhou and Zha}
\firstpageno{1}

\begin{document}

\title{Learning to Match via Inverse Optimal Transport}

\author{\name Ruilin Li \email ruilin.li@gatech.edu \\
       \addr  School of Mathematics\\
       School of Computational Science and Engineering\\
       Georgia Institute of Technology\\
       Atlanta, GA 30332, USA
       \AND
       \name Xiaojing Ye \email xye@gsu.edu \\
       \addr Department of Mathematics and Statistics\\
       Georgia State University\\
       Atlanta, GA 30302, USA
       \AND
       \name Haomin Zhou \email hmzhou@math.gatech.edu \\
       \addr School of Mathematics\\
       Georgia Institute of Technology\\
       Atlanta, GA 30332, USA
       \AND 
       \name Hongyuan Zha \email zha@cc.gatech.edu \\
       \addr School of Computational Science and Engineering\\
       Georgia Institute of Technology\\
       Atlanta, GA 30332, USA}

\editor{}

\maketitle

\begin{abstract}   
We propose a unified data-driven framework based on inverse optimal transport that can learn adaptive, nonlinear interaction cost function from noisy and incomplete empirical matching matrix and predict new matching in various matching contexts. 
We emphasize that the discrete optimal transport plays the role of a variational principle which gives rise to an optimization based framework for modeling the observed empirical matching data. 
Our formulation leads to a non-convex optimization problem which can be solved efficiently by an alternating optimization method. A key novel aspect of our formulation is the incorporation of marginal relaxation via regularized Wasserstein distance, significantly improving the robustness of the method in the face of noisy or missing empirical matching data.
Our model falls into the category of prescriptive models, which not only predict potential future matching, but is also able to explain what leads to empirical matching and quantifies the impact of changes in matching factors. The proposed approach has wide applicability including predicting matching in online dating, labor market, college application and crowdsourcing. We back up our claims with numerical experiments on both synthetic data and real world data sets.
\end{abstract}

\begin{keywords}
Matching, Inverse Problem, Optimal Transport, Robustification, Variational Inference 
\end{keywords}

\section{Introduction} \label{introduction}

Matching is a key problem at the heart of many real-world applications, including online dating \citep{hitsch2010matching}, labor market \citep{david2001wiring}, crowdsourcing \citep{yuen2011task}, marriage \citep{becker1973theory}, paper-to-reviewer assignment \citep{charlin2011framework}, kidney transplant donor matching \citep{dickerson2015futurematch} 
and ad allocation \citep{mehta2013online}. Owing to the wide applicability and great importance of matching, 2012 Nobel prize in economics were awarded to two economists Lloyd Shapley and Alvin Roth for their fundamental theoretic work  \citep{gale1962college} and substantive empirical investigations, experiments and practical design \citep{roth1989college, roth1992two} on matching. A good matching of individuals from two sides (e.g., men vs.~women, students vs.~school, papers vs.~reviewers) is essential to the overall health of the specific market/community. However, matching is a challenging problem due to two major complications: individuals from both sides exhibit various observable and latent features, which makes ``suitability of a match'' far more complex to assess; and the matching is implicitly, but significantly, influenced by the supply limitations of individuals from each side, so that the supply of an item can only satisfy a small number of users even though it is preferred by many. These two issues must be properly tackled in an optimal matching system.

In many matching problems, feature and preference data can be collected from individuals of either or both sides of the matching. Then a central planner may use such data sets to infer suitable matching or assignment. The feature and preference data collected in this way, however, can be incomplete, noisy, and biased for two reasons:
\begin{itemize} 
\item an individual may not be aware of the competitors from her own side and/or limited quantity of her preferred match from the opposite side
\item collection of a full spectrum of features is inherently difficult or even infeasible (e.g., a student's merit outside of her school curriculum in college admission, or religious belief of a person in a marriage, may not be included in the collected data)
\end{itemize} 
The former factor prevents individuals from listing their orders of preferences and positioning themselves strategically in the market, and the latter results in feature data set that is incomplete and biased. 

One possible approach is to use observed (and perhaps latent) features of individuals to generate rating matrix for user-item combinations as in many recomender systems (RS). However, this approach is not suitable given the biaseness and noise in collected feature or preference data and limited supply constraints in our matching problems. For example, in a standard movie RS problem, a movie can receive numerous high ratings and be watched by many people. In contrary, in a matching-based college admission problem, a student can enter only one college. Therefore, an optimal matching cannot be obtained solely based on personal ratings and preferences---the population of both sides also need to be taken into consideration in a matching problem. This significant difference between standard recommendation and matching demands for new theoretical and algorithmic developments.

Our approach to tackle the aforementioned challenges in matching inference is to consider a generalized framework based on inverse optimal transport, where the diversified population of each side of the matching is naturally modeled as a probability distribution, and the bilateral preference of two individuals in a potential match is captured by a matching reward (or equivalently, negative matching cost). More specifically, we obtain kernel representation of the cost by learning the feature interaction matrix from the matching data, under which the total social surplus is supposed to be maximal in a healthy matching market as suggested by economists \citep{carlier2010matching}.
Moreover, we employ a robust and flexible Wasserstein metric to learn feature-enriched marginal distributions, which proves to be very effective and robust in dealing with incomplete/noisy data in the matching problem. 

From a broader perspective, our approach is in the framework of optimization based on variational principles---the observed data are results of some optimization with an unknown objective function (or a known objective function with unknown parameters) that models the problem, and the goal is to learn the objective function (or its parameters) from the data. This approach is a type of prescriptive analytics: it exploits the motivation and mechanism of the subject, and produces results that are interpretable and meaningful to human. The solution process is more instructive and can make use of the observed data more effectively. In this broader sense, our proposed approach based on inverse optimal transport is in a similar spirit as inverse reinforcement learning \citep{ng2000algorithms}. Furthermore, the learned objective can be used to understand the effect of various factors in a matching and infer optimal matching strategy given new data. For instance, in online dating, riders allocation, and many other settings, the central planners (Tinder, OkCupid, Uber, Lyft, etc.) can use such prescriptive models to improve customer experience and align social good with their own profit goal. 

Our work is the first to establish a systematic framework for optimal matching inference using incomplete, noisy data under \emph{limited-supply constraints}. In particular, we advocate a nonlinear representation of cost/reward in a matching and view the matching strategy as a solution of (regularized) optimal transport. The equilibrium of certain matching markets, such as marriage, with simplifying assumptions, coincide with optimal transport plans \citep{becker1973theory}. Even for matching markets with complex structure and factors, whose matching mechanism is not yet completely unveiled, the proposed model serves as a powerful modeling tool to study those matchings. In terms of algorithmic development, we derive a highly efficient learning method to estimate the parameters in the cost function representation in the presence of computationally complex Wasserstein metrics. Numerical results show that our method contrasts favorably to other matching approaches in terms of robustness and efficiency, and can be used to infer optimal matching for new data sets accurately.

The rest of this paper is organized as follows: we briefly summarize related work in Section \ref{related} and review discrete optimal transport and its regularized version as well as their close connections in Section \ref{background}. Section \ref{model} describes the setup of proposed model and introduces our robust formulation via regularized Wasserstein distance, which tries to capture matching mechanism by leveraging regularized optimal transport. The derivation of optimization algorithm is detailed in Section \ref{derivation}. We evaluate our model in section \ref{experiments} on both synthetic data and real-world data sets. The last section concludes the paper and points to several directions for potential future research.
\section{Related Work}\label{related}
In this section, we briefly summarize some related work, including matching, ecological inference, recommender systems, distance metric learning and reciprocal recommendation.
\subsection{Matching}
Matching has been widely studied in economics community since the seminal work of \citet{koopmans1957assignment}. \citet{gale1962college} studied optimal matching in college admission, marriage market and proposed the famous Gale-Shapley algorithm. \citet{becker1973theory} gave a theoretic analysis in marriage market matching. \citet{roth1992two} did a thorough study and analysis in two-sided matching. \citet{chiappori2010hedonic, carlier2010matching} used optimal transport theory to study the equilibrium of certain matching markets. \citet{galichon2010matching} theoretically justified the usage of entropy-regularized optimal transport plan to model empirical matching in the presence of unobserved characteristics. Another interesting work \citep{charlin2011framework} proposed to predict optimal matching from learning suitability score in paper-to-review context where they used well-known linear regression, collaborative filtering algorithms to learn suitability scores. 
There are also some work studying dynamic matching theory and applications such as kidney exchange \citep{dickerson2012dynamic, dickerson2015futurematch} and barter exchange \citep{anderson2017efficient, ashlagi2017matching}.

A recent work closely related to ours is \citep{dupuy2016estimating}, where they worked with regularized optimal transport plan and modeled the cost by a bilinear form using an affinity matrix learned from data. 
By contrast, our work models the cost using a nonlinear kernel representation and incorporate regularized Wasserstein distance to tackle the challenging issues due the incomplete and noisy data in real-world matching problems. 

\subsection{Ecological Inference}
Ecological inference infers the nature of individual level behavior using aggregate (historically called ``ecological'') data, and is of particular interest to political scientists, sociologists, historians and epidemiologists. Due to privacy or cost issue, individual level data are eluding from researchers, hence the inference made through aggregate data are often subject to ecological fallacy \footnote{\url{https://en.wikipedia.org/wiki/Ecological\_fallacy}}. Previously, people proposed neighborhood model \citep{freedman1991ecological}, ecological regression \citep{goodman1953ecological} and King's method \citep{king2013solution}. A recent progress \citep{flaxman2015supported} is made by using additional information and leverage kernel embeddings of distributions, distribution regression to approach this problem. 

Our work differs from classical ecological inference problem and methods in four ways. First, we assume access to empirical matching at individual-level granularity which is not available in standard ecological inference setting. Second, in out framework, we focus on learning the preference of two sides in the matching and propose a novel and efficient method to learn it, after which inference/prediction problem becomes trivial as preference is known. Third, different from previous statistical methods, we adopt a model-based approach, leverages optimal transport to model matching and draw a connection between these two fields. Lastly, thanks to the model-based approach, we are able to shed light on what factors lead to empirical matching and quantitatively estimate the influence caused by changes of those factors, which are beyond the reach of traditional statistical approaches.

\subsection{Recommender Systems}
Collaborative filtering (CF) type recommender systems share many similarities with optimal matching problem as both need to learn user preference from rating/matching data and predict rating/matching in a collaborative manner. Matrix-factorization based models \citep{mnih2008probabilistic, salakhutdinov2008bayesian} enjoyed great success in Netflix Prize Competition. \citet{rendle2010factorization, rendle2012factorization} proposed factorization machine model  with strong sparse predictive power and ability to mimic several state-of-the-art, specific factorization methods. Recently there is trend of combining collaborative filtering with deep learning \citep{he2017neural2, he2017neural1}. Most items recommended by conventional recommender systems, however, are non-exclusive and can be consumed by many customers such as movies and music. They do not take supply limit of either or both sides into consideration hence may perform poorly in matching context.

\subsection{Distance Metric Learning} 
Our model essentially aims to learn an adaptive, nonlinear representation of the matching cost.
This is closely related to, but more general than, ground metric learning.
Prior research on learning different distance metrics in various contexts are fruitful, such as learning cosine similarity for face verification \citep{nguyen2010cosine}, learning Mahalanobis distance for clustering \citep{xing2003distance} and face identification \citep{guillaumin2009you}. However, distance learning for optimal transport distance is largely unexplored. \citet{cuturi2014ground} proposed to learn the ground metric by minimizing the difference of two convex polyhedral functions.  \citet{wang2012supervised} formulated a SVM-like minimization problem to learn Earth Mover's distance. Both approaches work with Wasserstein distance which involves solving linear programming as subroutine hence may be computationally too expensive. This paper works with regularized optimal transport distance, involving solving a matrix scaling problem as subroutine which is much lighter than linear programming. 

\subsection{Reciprocal Recommendation} Another line of related research is reciprocal recommendation \citep{brozovsky2007recommender, pizzato2013recommending}, which also tries to model two-side preference by computing reciprocal score via a hand-craft score function. By a sharp contrast, our model learns how two sides interact with each other from observed noisy/incomplete matching in a data-driven fashion.

\section{Background and Preliminaries} \label{background}
In this section, we present Kantorovich's formulation of optimal transportation problem (in discretized setting) and its regularized version.

\subsection{Optimal Transport}
Given two probability vectors $\bs{\mu} \in \Sigma_m$ and $\bs{\nu} \in \Sigma_n$, where 
$\Sigma_d := \{ \bs{x} \in \mathbb{R}_+^{d}| \bs{1}^T\bs{x} =1\}$ 
is the standard $(d-1)$-dimensional probability simplex, denote the transport polytope of $\bs{\mu}$ and $\bs{\nu}$ by 
\[ U(\bs{\mu}, \bs{\nu}) := \{ \pi \in \mathbb{R}^{m \times n}_+ | \pi \bs{1} = \bs{\mu}, \pi^T\bs{1} = \bs{\nu} \}\]
namely the set of all $m\times n$ non-negative matrices satisfying marginal constraints specified by $\bs{\mu}, \bs{\nu}$. Note that $U(\bs{\mu}, \bs{\nu})$ is a convex, closed and bounded set containing joint probability distributions with $\bs{\mu}$ and $\bs{\nu}$ as marginals.
Furthermore, if given a cost matrix $C = [C_{ij}]\in \mathbb{R}^{m \times n}$ where $C_{ij}$ measures the cost of moving a unit mass from ${\mu}_i$ to ${\nu}_j$,  define
\[ d(C, \bs{\mu}, \bs{\nu}) := \min_{\pi \in U(\bs{\mu}, \bs{\nu})} \langle \pi, C\rangle \]
where $\langle A, B\rangle = \Tr(A^TB)$ is the Frobenius inner product for matrices. This quantity describes how to optimally redistribute $\bs{\mu}$ to $\bs{\nu}$ so that the total cost is minimized, hence providing a means to measure the similarity between the two distributions. In particular, when $C \in \mathcal{M}^{d}$, that is, $C$ is in the cone of distance matrices \citep{brickell2008metric}, defined as 
\[ \mathcal{M}^{d} := \{C \in \mathbb{R}_+^{d \times d}| C_{ii} = 0, C_{ij}=C_{ji},  C_{ij}\le C_{ik}+C_{kj}, \forall i, j, k\},\] 
then it is shown that $d(C)$ is a distance (or metric) on $\Sigma_d$ \citep{villani2008optimal}, named the optimal transport distance (also known as the 1-Wasserstein distance or the earth mover distance). The minimizer $\pi$ is called the optimal transport plan. 

In discrete case, computing OT distance amounts to solving a linear programming problem, for which there exists dedicated algorithm with time complexity $\mathcal{O}(n^3\log n)$ \citep{pele2009fast}. Nevertheless, this is still too computationally expensive in large scale settings. In addition, OT plan $\pi$ typically admits a sparse form which is not robust in data-driven applications.
We refer readers to \cite{villani2008optimal, peyre2017computational} for a thorough theoretical and computational treatment of optimal transport.

\subsection{Regularized Optimal Transport}
To address the aforementioned computational difficulty, \citet{cuturi2013sinkhorn} proposed to use a computationally-friendly approximation of OT distance by introducing entropic regularization. This also mitigates the sparsity and improve the smoothness of OT plan.
Concretely, consider
\[
d_\lambda(C, \bs{\mu}, \bs{\nu}) := \min_{\pi \in U(\bs{\mu}, \bs{\nu})} \{\langle \pi, C\rangle - H(\pi) / \lambda \}
\]
where $H(\pi)$ is the discrete entropy defined by
\[ 
H(\pi) = -\sum_{i, j=1}^{m,n} \pi_{ij}(\log \pi_{ij} -1),
\] 
and $\lambda>0$ is the regularization parameter controlling the trade-off between sparsity and uniformity of $\pi$. We refer the above quantity as regularized optimal transport (ROT) distance (regularized Wasserstein distance) though it is \textit{not} an actual distance measure. Due to the strict convexity introduced by entropy, $d_\lambda(C, \bs{\mu}, \bs{\nu})$ admits a unique minimizer with full support $\pi^\lambda(C, \bs{\mu}, \bs{\nu})$, which we  call regularized optimal transport plan in the sequel. The ROT plan $\pi^\lambda$ has a semi-closed form solution 
\begin{equation}\label{eq:semi-closed}
\pi^\lambda = \diag(\bs{a}) K \diag(\bs{b})
\end{equation}
where $\bs{a} \in \mathbb{R}^{m}, \bs{b} \in \mathbb{R}^{n}$ are positive vectors and are uniquely determined up to a multiplicative constant and $K := \exp(-\lambda C)$ is the component-wise exponential of $-\lambda C$. We can efficiently compute $\bs{a}$ and $\bs{b}$ by Sinkhorn-Knopp matrix scaling algorithm \citep{sinkhorn1967concerning}, also known as iterative proportional fitting procedure (IPFP). 
The algorithm alternately scales rows and columns of $K$ to fit the specified marginals. See Algorithm \ref{alg:sinkhorn} for detailed description of the Sinkhorn-Knopp algorithm. 

\begin{algorithm}[tb]
   \caption{Sinkhorn-Knopp Algorithm}
   \label{alg:sinkhorn}
\begin{algorithmic}
   \STATE {\bfseries Input:} marginal distributions $\bs{\mu}, \bs{\nu}$, cost matrix $C$, regularization parameter $\lambda$
   \STATE $K = \exp(-\lambda C)$ 
   \STATE $\bs{a} = \bs{1}$
   \WHILE {not converge}
   \STATE $\bs{b} \gets \frac{\bs{\nu}}{K^T\bs{a}}$
   \STATE $\bs{a} \gets \frac{\bs{\mu}}{K\bs{b}}$
   \ENDWHILE
   \STATE $\pi = \diag(\bs{a}) K \diag(\bs{b})$
   \STATE \textbf{return} $\pi, \bs{a}, \bs{b}$
\end{algorithmic}
\end{algorithm}

Not surprisingly, we have 
\[
\lim_{\lambda \to \infty} d_\lambda(C, \bs{\mu}, \bs{\nu}) = d(C, \bs{\mu}, \bs{\nu})\]
ROT distance converges to OT distance as $\lambda$ tends to infinity, i.e., entropic regularization diminishes. Moreover, let 
\[
\Pi(C, \bs{\mu}, \bs{\nu}) = \{ \pi| \langle \pi, C \rangle = \min_{\pi \in U(\bs{\mu}, \bs{\nu})} \langle \pi, C\rangle \}
\]
be the set of all OT plan and 
\[
\pi^\star = \argmax_{\pi \in \Pi(C, \bs{\mu}, \bs{\nu})} H(\pi)
\]
be the joint distribution with highest entropy within $\Pi(C, \bs{\mu}, \bs{\nu})$, then 
\[ 
\lim_{\lambda \to \infty} \pi^\lambda = \pi^\star
\]
in another word, ROT plan converges to the most uniform OT plan and the rate of convergence is exponential, as shown by \citet{cominetti1994asymptotic}. 
The generalization of entropic regularization, Tsallis entropy regularized optimal transport also receives more and more attention and is studied by \citet{muzellec2017tsallis}.

ROT has more favorable computational properties than OT does, as it only involves component-wise operation and matrix-vector multiplication, all of which are of quadratic complexity, and can be parallelized \citep{cuturi2013sinkhorn}. This fact makes ROT popular for measuring dissimilarity between 
potentially unnormalized distributions in many research fields: machine learning \citep{geneway2017learning, rolet2016fast, laclau2017co}, computer vision \citep{cuturi2014fast} and image processing \citep{papadakis2015optimal}. 

Besides computation efficiency, we argue in next section why it is more appropriate to use ROT in our setting from a modeling perspective.
\section{Learning to Match} \label{model}
For the ease of exposition, we refer two sides of the matching market as users and items. The methodology is suitable in various applications where optimal matching is considered under supply limitations, such as marriage market, cab hailing, college admission,  organ allocation, paper matching ans so on.
Suppose we have $m$ user profiles $\{\bs{u}_i\}_{i \in [m]} \subset \mathbb{R}^{p}$, $n$ item profiles $\{\bs{v}_j\}_{j \in [n]} \subset \mathbb{R}^{q}$ and $N_{ij}$, the count of times $(\bs{u}_i, \bs{v}_j)$ appears in matching. Let $N = \sum_{i, j=1}^{m, n} N_{ij}$ be the number of all matchings, $[\hat{\pi}_{ij}] = [N_{ij} / N]$ be the observed matching matrix and $\hat{\bs{\mu}} = \hat{\pi}\bs{1}, \hat{\bs{\nu}} = \hat{\pi}^T\bs{1}$ be the sample marginals. Suppose we are also given two cost matrices $C_u$ and $C_v$, measuring user-user dissimilarity and item-item dissimilarity respectively, we can then select two appropriate constants $\lambda_u$ and $\lambda_v$ and use $d_{\lambda_u}(C_u, \bs{\mu}_1, \bs{\mu}_2)$ and $d_{\lambda_v}(C_v, \bs{\nu}_1, \bs{\nu}_2)$ to measure the dissimilarity of probability distributions $\bs{\mu}_1, \bs{\mu}_2$ over user profile space and that of $\bs{\nu}_1, \bs{\nu}_2$ over item profile space.

\subsection{Modeling Observed Matching Matrix}
\cite{becker1973theory} pointed out that equilibrium of some matching markets coincide with optimal transport plans which are often highly sparse. The implication of this theory is far from being realistic, though, as we observe heterogeneous matchings in real world. \cite{galichon2015cupid} argued that there are latent features having significant impact on matching but unfortunately unobservable to researchers. Hence they proposed to leverage a combination of pure optimal transport plan and mutual information of two sides of matching to model empirical matching data which is exactly entropy-regularized optimal transport.

Furthermore, the observed matching matrix $\hat{\pi}$ (hence the empirical marginals) often contains noisy, corrupted, and/or missing entries, consequently it is more robust to employ a regularized optimal transport plan rather than enforce an exact matching to empirical data in cost function learning.

To that end, we propose to use regularized optimal transport plan $\pi^\lambda(C, \bs{\mu}, \bs{\nu})$ in our learning task. This also has several important benefits that take the following aspects into modeling consideration in addition to unobserved latent features:

\begin{itemize}

\item \textbf{Enforced Diversity.} Diversity is enforced in certain matchings as is the case when admission committee making decisions on applicants, diversity is often an important criterion and underrepresented minorities may be preferred. Entropy term captures the uncertainty introduced by diversity. The idea of connecting entropy with matching to capture/promote diversity is also adopted, for example, by \citet{agrawal2018proportional} and \citet{ahmed2017diverse}.

\item \textbf{Aggregated Data.} Sometimes due to privacy issues or insufficient number of matched pairs, only grouped or aggregated data, rather than individual data are available. Accordingly, the aggregated matching is usually denser than individual level matching and is less likely to exhibit sparsity.
\end{itemize}

\subsection{Cost Function via Kernel Representation}
The cost function $C_{ij} = c(\bs{u}_i, \bs{v}_j)$ is of critical importance as it determines utility loss of user $\bs{u}_i$ and item $\bs{v}_j$. The lower the cost is, the more likely user $\bs{u}_i$ will match item $\bs{v}_j$, subject to supply limit of items. A main contribution of this work is to learn an adaptive, nonlinear representation of the cost function from empirical matching data. To that end, we present several properties of cost function in optimal matching that support the feasibility.

First of all, we show in the following proposition that the cost function $C$ is not unique in general but can be uniquely determined in a special and important case.
\begin{proposition}\label{prop:1}
Given two marginal probability vectors $\bs{\mu}\in \Sigma_{m}$, $\bs{\nu}\in \Sigma_{n}$, define $F: \mathbb{R}^{m \times n} \to U(\bs{\mu}, \bs{\nu})$, $F(C) = \pi^\lambda(C, \bs{\mu}, \bs{\nu})$ is the ROT plan of $C$.
Then $F$ is in general not injective, however, when $m = n$ and $F$ is restricted on $\mathcal{M}^n$, $F_{|\mathcal{M}^n}(C)$ is injective.

\end{proposition}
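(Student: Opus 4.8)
The plan is to work entirely through the semi-closed form $\pi^\lambda = \diag(\bs{a})\,K\,\diag(\bs{b})$ with $K = \exp(-\lambda C)$, which shows that $F$ depends on $C$ only through the kernel $K$ together with the marginal-enforcing Sinkhorn scalings $\bs{a},\bs{b}$.

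For the non-injectivity claim, I would perturb $C$ by a rank-structured term: set $C' = C + \bs{p}\bs{1}^T + \bs{1}\bs{q}^T$, so that $C'_{ij} = C_{ij} + p_i + q_j$. Componentwise this only rescales the kernel, $\exp(-\lambda C') = \diag(\exp(-\lambda\bs{p}))\,K\,\diag(\exp(-\lambda\bs{q}))$, so the original plan $\pi^\lambda(C,\bs\mu,\bs\nu) = \diag(\bs{a})\,K\,\diag(\bs{b})$ is again of scaling form for the new kernel and still lies in $U(\bs\mu,\bs\nu)$. By the uniqueness of the ROT plan it must coincide with $\pi^\lambda(C',\bs\mu,\bs\nu)$; hence $F$ is constant along the whole family of such perturbations (an $(m+n-1)$-dimensional space, after accounting for the one overlap $\bs{p}=c\bs{1},\bs{q}=-c\bs{1}$), and is therefore not injective.

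The injective part, with $m=n$ and $C,C'\in\mathcal{M}^n$, rests on the converse of the above observation. The key step is that $F(C)=F(C')$ \emph{forces} $C'-C$ to have exactly this additive form. Here I use that the ROT plan $\pi = F(C) = F(C')$ has full support, so I may take entrywise logarithms of its two scaling representations. Equating $\log a_i - \lambda C_{ij} + \log b_j = \log a_i' - \lambda C_{ij}' + \log b_j'$ and setting $p_i = (\log a_i' - \log a_i)/\lambda$, $q_j = (\log b_j' - \log b_j)/\lambda$ gives $C_{ij}' - C_{ij} = p_i + q_j$ for all $i,j$.

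It then remains to rule out any nonzero perturbation of this form staying inside the distance cone. The zero-diagonal axiom $C_{ii}=C_{ii}'=0$ yields $p_i+q_i=0$, i.e.\ $q_i=-p_i$; the symmetry axiom $C_{ij}=C_{ji}$, $C_{ij}'=C_{ji}'$ yields $p_i+q_j=p_j+q_i$, which together with $\bs{q}=-\bs{p}$ forces $p_i=p_j$ for all $i,j$. Thus $\bs{p}=c\bs{1}$, $\bs{q}=-c\bs{1}$, so $p_i+q_j\equiv 0$ and $C'=C$. I expect the converse characterization to be the only real obstacle: once full support legitimizes passing to logarithms, the distance-matrix axioms pin down the perturbation by elementary algebra, and the triangle inequality is not even needed.
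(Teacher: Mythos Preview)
Your proposal is correct and follows essentially the same approach as the paper: both arguments use the semi-closed form of the ROT plan to show that $F(C)=F(C')$ forces $C'-C$ to be of the form $\bs{p}\bs{1}^T+\bs{1}\bs{q}^T$, and then use the zero-diagonal and symmetry axioms of $\mathcal{M}^n$ to force this perturbation to vanish. The only cosmetic differences are that you work additively after taking logarithms while the paper stays in multiplicative form with the kernel $\exp(-\lambda C)$, and that your non-injectivity example (the full $(m+n-1)$-parameter family $C+\bs{p}\bs{1}^T+\bs{1}\bs{q}^T$) is a bit more informative than the paper's one-parameter choice $C+\epsilon\bs{1}\bs{1}^T$; your remark that the triangle inequality is never used is also a nice sharpening.
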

%
\begin{proof}
One can easily verify that $F$ is well-defined from the strict convexity of ROT. The optimality condition of ROT reads as 
\[ \pi^\lambda(C, \bs{\mu}, \bs{\nu}) = \exp(\lambda(-C + \bs{a}\bs{1}^T + \bs{1}\bs{b}^T))\]
where $\bs{a} \in \Rbb^m$ and  $\bs{b}\in\Rbb^n$ are Lagrangian multipliers dependent on $C$ and $\lambda$ such that $\pi^\lambda(C, \bs{\mu}, \bs{\nu}) \in U(\bs{\mu},\bs{\nu})$.
Therefore, $\pi^\lambda(C+\epsilon \bs{1}\bs{1}^T, \bs{\mu}, \bs{\nu}) = \exp(\lambda(-C -\epsilon \bs{1}\bs{1}^T + (\bs{a}+\epsilon \bs{1})\bs{1}^T + \bs{1}\bs{b}^T)) = \pi^\lambda(C, \bs{\mu}, \bs{\nu})$ for any $\epsilon>0$.
Therefore $F$ is in general not injective.

If $m = n$ and $C_1, C_2 \in \mathcal{M}^n$,  by the semi-closed form \eqref{eq:semi-closed} of ROT plan, there exist positive vectors $\bs{a}_1, \bs{b}_1$ and $\bs{a}_2, \bs{b}_2$ such that 
    \begin{align*}
        \pi^\lambda(C_1, \bs{\mu}, \bs{\nu}) &= \diag(\bs{a}_1) \exp({-\lambda C_1}) \diag(\bs{b}_1) \\
        \pi^\lambda(C_2, \bs{\mu}, \bs{\nu}) &= \diag(\bs{a}_2) \exp({-\lambda C_2}) \diag(\bs{b}_2)
    \end{align*}
    If $\pi^\lambda(C_1, \bs{\mu}, \bs{\nu}) = \pi^\lambda(C_2, \bs{\mu}, \bs{\nu})$, we have
    \begin{equation*}
        \exp({-\lambda C_1}) = \diag(\bs{a}) \exp({-\lambda C_2}) \diag(\bs{b})
    \end{equation*}
    where $\exp(\cdot)$ is component-wise exponential, $\bs{a} = \log \frac{\bs{a}_2}{\bs{a}_1}$, $\bs{b} = \log\frac{\bs{b}_2}{\bs{b}_1}$.

    Since $C_1, C_2$ are symmetric matrices, it follows that $\bs{a} = s\bs{b}$. By appropriately rescaling $\bs{a}$ and $\bs{b}$ to make them equal, we have 
    \begin{equation*}
        \exp(-\lambda C_1) = \diag(\bs{w}) \exp(-\lambda C_2) \diag(\bs{w})
    \end{equation*}
    where $\bs{w} = \bs{a} / \sqrt{s}$.
    Inspecting $(i,i)$ entry of both sides, we immediately conclude that $\bs{w} = \bs{1}$ and $C_1 = C_2$.
\end{proof}
Actually, the general non-uniqueness or non-identifiability of cost $C$ is quite natural. For instance, in an online auction setting, if all bidders raise their bids by the same amount, the result of the auction will not change because the rank of bidders remain the same and the original winner still wins the auction. Therefore, by observing empirical matching alone, we can not determine cost matrix definitively without further assumption. Proposition \ref{prop:1} guarantees the uniqueness of learned cost if we model it as a distance matrix, e.g. Mahalanobis distance ($C_{ij} =\sqrt{ (\bs{u}_i - \bs{v}_j)^T M (\bs{u}_i - \bs{v}_j)}$, where $M$ is a positive definite matrix). However, in many cases, cost may grow nonlinearly in the difference of features. An even more serious issue is that if the number of features of two sides of matching are inconsistent or two sides do not lie in the same feature space at all, it would be infeasible to use a distance metric to capture the cost between them due to such dimension incompatibility.

Therefore, as generalized distance functions \citep{scholkopf2001kernel}, 
kernel representation which is able to measure matching cost even when features of two sides do not lie in the same feature space can be leveraged to model the cost function, i.e.,

\[ c(\bs{u}_i, \bs{v}_j) = k(G\bs{u}_i, D\bs{v}_j)\]
where $k(\bs{x}, \bs{y})$ is a specific (possibly nonlinear) kernel, $G \in \mathbb{R}^{r \times p}$ and $D \in \mathbb{R}^{r \times q}$ are two unknown linear transformations to be learned. $G\bs{u},  D\bs{v}$ can be interpreted as the latent profile associated with users and items and are studied by \citet{agarwal2009regression}.

For a wide class of commonly used kernels including linear kernel, polynomial kernel and sigmoid kernel, they depend only on the inner product of two arguments through an activation function $f$, i.e. $ k(\bs{x}, \bs{y}) = f(\bs{x}^T\bs{y})$. For such kernels, we have 
\[ 
c(\bs{u}_i, \bs{v}_j) = f(\bs{u}_i^T G^TD \bs{v}_j)
\] 
and it suffices to learn $A = G^T D$. In this case, cost matrix 
\[ 
C(A)= f(U^TAV)
\]
is parametrized by $A$ and we refer $A$ as interaction matrix. Here we apply $f$ component-wise on $U^TAV$. For ease of presentation, we will work with kernels of this form in the sequel. With kernel function representation, it is still likely that a matching matrix corresponds to multiple cost matrices, and we will be contented with finding one of them that explains the observed empirical matching.


\subsection{Kernel Inference with Wasserstein Marginal Regularization}
A straight forward way to learn $C(A)$ in kernel representation is estimating parameter $A$ through minimizing negative log likelihood
\begin{equation}\label{eq:fix}
\min_{A} -\sum_{i=1}^m \sum_{j=1}^n \hat{\pi}_{ij}\log\pi_{ij}
\end{equation}
where $\pi = \pi^\lambda(C(A), \hat{\bs{\mu}}, \hat{\bs{\nu}})$, i.e., one enforces the optimal plan $\pi$ to satisfy $\pi \bs{1} = \hat{\bs{\mu}}$ and $\pi^T \bs{1} = \hat{\bs{\nu}}$. Note that \eqref{eq:fix} is equivalent to minimizing the reverse Kullback-Leibler divergence \citep{Bishop_2006} of ROT plan $\pi$ with respect to empirical matching $\hat{\pi}$, i.e.,
\[
    \min_{A} \mbox{KL}(\hat{\pi} \| \pi)
\] 
This is the formulation proposed in  \cite{dupuy2016estimating} which we refer as inverse optimal transport formulation (IOT) in the sequel. 

In this variation principle based framework, the ROT plan $\pi$ has the same marginals as the empirical matching $\hat{\pi}$ does, which is reasonable if the marginal information of empirical matching is sufficiently accurate. In practice, however, the size of samples available is usually small compared to that of population, hence the empirical marginals inferred from samples can be incomplete and noisy, which causes a \emph{systematic error} no smaller than 
$O(\max\{\|\Delta \bs{\mu}\|_1,\|\Delta\bs{\nu}\|_1\})$ as shown in proposition \ref{prop:2}.

\begin{lemma}\label{lemma:1}
Supppose $\bs{\mu}_1, \bs{\mu}_2 \in \Sigma_m$, $\bs{\nu}_1, \bs{\nu}_2 \in \Sigma_n$ and $\Delta \bs{\mu} = \bs{\mu}_1 -\bs{\mu}_2, \Delta\bs{\nu} = \bs{\nu}_1-\bs{\nu}_2$, we have
\begin{equation*} 
        \min_{\substack{\pi_1 \in U(\bs{\mu}_1, \bs{\nu}_1)\\ \pi_2 \in U(\bs{\mu}_2, \bs{\nu}_2)}} \|\pi_1 - \pi_2\|_F^2 \ge \frac{m\|\Delta \bs{\mu}\|_2^2 + n\|\Delta \bs{\nu}\|^2_2}{mn} 
\end{equation*}
where $\displaystyle \|\pi\|_F = \sqrt{\sum_{i=1}^m\sum_{j=1}^n\pi_{ij}^2}$ is Frobenius norm.
\end{lemma}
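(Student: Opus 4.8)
The plan is to reduce the constrained minimization over a \emph{pair} of transport plans to an unconstrained-in-sign minimization over a single difference matrix, and then solve that relaxed problem exactly. First I would set $D = \pi_1 - \pi_2$ and observe that the marginal constraints force $D\bs{1} = \bs{\mu}_1 - \bs{\mu}_2 = \Delta\bs{\mu}$ and $D^T\bs{1} = \bs{\nu}_1 - \bs{\nu}_2 = \Delta\bs{\nu}$. Since every admissible pair $(\pi_1,\pi_2)$ produces such a $D$, and the entrywise nonnegativity of $\pi_1,\pi_2$ only shrinks the feasible set, the left-hand side is bounded below by
\[
\min_{\substack{D \in \Rbb^{m\times n}\\ D\bs{1} = \Delta\bs{\mu},\ D^T\bs{1} = \Delta\bs{\nu}}} \|D\|_F^2 .
\]
It therefore suffices to evaluate this relaxed quadratic program.

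To solve it I would exhibit the explicit minimizer $D^\star_{ij} = \Delta\mu_i/n + \Delta\nu_j/m$ and argue optimality by an orthogonality (Pythagorean) decomposition, rather than grinding through the Lagrangian. A quick check confirms feasibility: the $i$-th row sum of $D^\star$ is $\Delta\mu_i + (\bs{1}^T\Delta\bs{\nu})/m$ and the $j$-th column sum is $(\bs{1}^T\Delta\bs{\mu})/n + \Delta\nu_j$, both of which collapse to the required values because $\bs{\mu}_1,\bs{\mu}_2$ and $\bs{\nu}_1,\bs{\nu}_2$ lie in probability simplices, so $\bs{1}^T\Delta\bs{\mu} = \bs{1}^T\Delta\bs{\nu} = 0$. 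For any other feasible $D$, write $E = D - D^\star$; then $E$ has all row and column sums equal to zero, which makes the cross term vanish, $\langle D^\star, E\rangle = \sum_i (\Delta\mu_i/n)(\sum_j E_{ij}) + \sum_j (\Delta\nu_j/m)(\sum_i E_{ij}) = 0$. Hence $\|D\|_F^2 = \|D^\star\|_F^2 + \|E\|_F^2 \ge \|D^\star\|_F^2$, so $D^\star$ is the global minimizer of the relaxed problem.

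It then remains to compute $\|D^\star\|_F^2 = \sum_{i,j}(\Delta\mu_i/n + \Delta\nu_j/m)^2$. Expanding the square, the two diagonal contributions give $\|\Delta\bs{\mu}\|_2^2/n + \|\Delta\bs{\nu}\|_2^2/m$, while the mixed term is proportional to $(\bs{1}^T\Delta\bs{\mu})(\bs{1}^T\Delta\bs{\nu}) = 0$ and vanishes again. This yields exactly $(m\|\Delta\bs{\mu}\|_2^2 + n\|\Delta\bs{\nu}\|_2^2)/(mn)$, matching the claimed lower bound.

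I expect the main obstacle to be conceptual rather than computational: the naive per-row Cauchy--Schwarz estimate $\|D\|_F^2 \ge \|\Delta\bs{\mu}\|_2^2/n$ (and its column analogue) only delivers the \emph{maximum} of the two terms, not their sum, so one must genuinely couple the row and column constraints. The orthogonality argument is what supplies this coupling, and the crucial enabling fact throughout is that $\Delta\bs{\mu}$ and $\Delta\bs{\nu}$ each sum to zero, which is precisely what forces the cross terms to disappear and keeps the relaxed minimum attained and equal to the stated quantity.
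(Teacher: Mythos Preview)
Your argument is correct and rests on the same relaxation as the paper's: both drop the nonnegativity constraint on $\pi_1,\pi_2$ and reduce to a quadratic minimization over an affine set determined by the marginal differences. Where you diverge is in how that relaxed problem is solved. The paper keeps both $\pi_1$ and $\pi_2$ as separate variables, writes down the full Lagrangian with four multiplier vectors, solves the KKT system to obtain $\pi_1-\pi_2 = \bs{\lambda}_1\bs{1}^T + \bs{1}\bs{\mu}_1^T$, and then expands and simplifies that expression. You instead pass immediately to the single difference matrix $D$, guess the minimizer $D^\star_{ij}=\Delta\mu_i/n+\Delta\nu_j/m$, and certify it by the orthogonality (Pythagorean) argument that any feasible perturbation $E$ has zero row and column sums and hence $\langle D^\star,E\rangle=0$. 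Your route is more elementary and transparent: it identifies $D^\star$ as the orthogonal projection of the origin onto the affine constraint set and avoids any multiplier bookkeeping, while the paper's KKT derivation carries along free constants ($x,y,z$) that must be argued away at the end. Both arrive at the same value, and both rely in exactly the same places on $\bs{1}^T\Delta\bs{\mu}=\bs{1}^T\Delta\bs{\nu}=0$.
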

\begin{proof}
See Appendix \ref{app:1}.
\end{proof}

\begin{proposition}\label{prop:2}
If empirical $\hat{\bs{\mu}}, \hat{\bs{\nu}}$ are off from true $\bs{\mu}, \bs{\nu}$ by $\Delta \bs{\mu}, \Delta \bs{\nu}$, then the matching matrix $\pi_{\text{IOT}}$ recovered by solving equation \eqref{eq:fix} has error lower bounded by
\begin{equation*} 
   \|\pi_0 - \pi_{\text{IOT}}\|_1 \ge \sqrt{\frac{\|\Delta \bs{\mu}\|_1^2 + \|\Delta \bs{\nu}\|_1^2}{mn}} 
\end{equation*}
where $\|\pi\|_1 = \sum_{i,j=1}^{m,n} |\pi_{ij}|$, $\hat{\bs{\mu}},\bs{\mu} \in \mathbb{R}^m$ and $\hat{\bs{\nu}}, \bs{\nu} \in \mathbb{R}^n$, $\pi_0$ is the ground truth matching matrix, $\pi_{\text{IOT}} = \pi^\lambda(C(A^\star), \hat{\bs{\mu}}, \hat{\bs{\nu}})$ and $A^\star$ is the solution of equation \eqref{eq:fix}.
\end{proposition}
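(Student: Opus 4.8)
The plan is to exploit the structural fact that the recovered plan $\pi_{\text{IOT}}$ is \emph{forced} to carry the empirical marginals $\hat{\bs{\mu}}, \hat{\bs{\nu}}$, while the ground-truth plan $\pi_0$ carries the true marginals $\bs{\mu}, \bs{\nu}$; the marginal discrepancy $\Delta\bs{\mu}, \Delta\bs{\nu}$ then forces an unavoidable gap between the two plans, and Lemma \ref{lemma:1} already quantifies exactly this gap in Frobenius norm. First I would note that $\pi_{\text{IOT}} = \pi^\lambda(C(A^\star), \hat{\bs{\mu}}, \hat{\bs{\nu}})$ is a ROT plan with prescribed marginals, so $\pi_{\text{IOT}} \in U(\hat{\bs{\mu}}, \hat{\bs{\nu}})$, while the ground truth satisfies $\pi_0 \in U(\bs{\mu}, \bs{\nu})$. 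Hence the pair $(\pi_0, \pi_{\text{IOT}})$ is feasible in the minimization of Lemma \ref{lemma:1} with $\bs{\mu}_1 = \bs{\mu}, \bs{\nu}_1 = \bs{\nu}$ and $\bs{\mu}_2 = \hat{\bs{\mu}}, \bs{\nu}_2 = \hat{\bs{\nu}}$, giving
\[
\|\pi_0 - \pi_{\text{IOT}}\|_F^2 \;\ge\; \min_{\substack{\pi_1 \in U(\bs{\mu}, \bs{\nu})\\ \pi_2 \in U(\hat{\bs{\mu}}, \hat{\bs{\nu}})}} \|\pi_1 - \pi_2\|_F^2 \;\ge\; \frac{m\|\Delta\bs{\mu}\|_2^2 + n\|\Delta\bs{\nu}\|_2^2}{mn}.
\]

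The remaining work is purely a matter of converting norms, and here I would be careful that every inequality points the same way so no slack is wasted. Treating the $m \times n$ matrix $\pi_0 - \pi_{\text{IOT}}$ as a vector of $mn$ entries, the elementary bound $\|x\|_1 \ge \|x\|_2$ yields $\|\pi_0 - \pi_{\text{IOT}}\|_1 \ge \|\pi_0 - \pi_{\text{IOT}}\|_F$, which upgrades the Frobenius bound to an $\ell_1$ bound. Next, Cauchy--Schwarz applied to $\Delta\bs{\mu}\in\mathbb{R}^m$ and $\Delta\bs{\nu}\in\mathbb{R}^n$ gives $\|\Delta\bs{\mu}\|_1^2 \le m\|\Delta\bs{\mu}\|_2^2$ and $\|\Delta\bs{\nu}\|_1^2 \le n\|\Delta\bs{\nu}\|_2^2$, so that $m\|\Delta\bs{\mu}\|_2^2 + n\|\Delta\bs{\nu}\|_2^2 \ge \|\Delta\bs{\mu}\|_1^2 + \|\Delta\bs{\nu}\|_1^2$. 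Chaining these two conversions produces
\[
\|\pi_0 - \pi_{\text{IOT}}\|_1 \;\ge\; \sqrt{\frac{m\|\Delta\bs{\mu}\|_2^2 + n\|\Delta\bs{\nu}\|_2^2}{mn}} \;\ge\; \sqrt{\frac{\|\Delta\bs{\mu}\|_1^2 + \|\Delta\bs{\nu}\|_1^2}{mn}},
\]
which is the claimed bound.

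The crux is really the first paragraph: correctly identifying the marginal polytopes of $\pi_0$ and $\pi_{\text{IOT}}$ so that Lemma \ref{lemma:1} applies, which rests on the implicit assumption that $\pi_0$ genuinely has marginals $\bs{\mu}, \bs{\nu}$. Everything after that is a deterministic chain of standard norm comparisons, so I do not anticipate a real obstacle there. The one point I would double-check is bookkeeping: that both norm conversions ($\ell_1 \ge \ell_2$ for the matrix entries, and Cauchy--Schwarz for the marginal errors) point in compatible directions, and that the $\sqrt{mn}$ normalization in the target statement matches the $1/(mn)$ factor inherited from Lemma \ref{lemma:1}.
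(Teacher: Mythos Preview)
Your proposal is correct and follows essentially the same approach as the paper: identify that $\pi_0 \in U(\bs{\mu},\bs{\nu})$ and $\pi_{\text{IOT}} \in U(\hat{\bs{\mu}},\hat{\bs{\nu}})$, invoke Lemma~\ref{lemma:1} for the Frobenius lower bound, and then convert norms via $\|\cdot\|_1 \ge \|\cdot\|_2$ on the matrix side and $\|\cdot\|_1 \le \sqrt{d}\,\|\cdot\|_2$ on the marginal side. The paper's proof is nearly identical, differing only in that it routes through $\min\|\pi_1-\pi_2\|_1$ before dropping to Frobenius, whereas you apply Lemma~\ref{lemma:1} directly to the pair $(\pi_0,\pi_{\text{IOT}})$; both orderings are valid and yield the same chain of inequalities.
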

\begin{proof}
We know $\pi_0 \in U(\bs{\mu}, \bs{\nu})$ and $\pi_{\text{fix}} \in U(\hat{\bs{\mu}}, \hat{\bs{\nu}})$, by lemma \ref{lemma:1} and inequalities between 1-norm and 2-norm of vectors
\[ \|\bs{x}\|_2 \le \|\bs{x}\|_1 \le \sqrt{d}\|\bs{x}\|_2\]
where $\bs{x} \in \mathbb{R}^d$,
we have 
\begin{align*} 
\|\pi_0 - \pi_{\text{fix}}\|_1 &\ge \min_{\substack{\pi_1 \in U(\bs{\mu}, \bs{\nu})\\ \pi_2 \in U(\hat{\bs{\mu}}, \hat{\bs{\nu}})}} \|\pi_1 - \pi_2\|_1 \\
&\ge (\min_{\substack{\pi_1 \in U(\bs{\mu}, \bs{\nu})\\ \pi_2 \in U(\hat{\bs{\mu}}, \hat{\bs{\nu}})}} \|\pi_1 - \pi_2\|_F^2)^\frac{1}{2} \\
&\ge  \sqrt{\frac{m\| \Delta\bs{\mu} \|^2_2 + n \|\Delta\bs{\nu}\|_2^2}{mn}} \\
&\ge \sqrt{\frac{\| \Delta\bs{\mu} \|^2_1 + \|\Delta\bs{\nu}\|^2_1}{mn}}
\end{align*}
\end{proof}

We have seen that inaccurate marginal information can serious harm the recovery performance of ground truth matching matrix. Not unexpectedly, it could mislead us to learn an inaccurate cost matrix as well,  as stated in proposition \ref{prop:2.5}.

\begin{lemma}\label{lemma:2}
Suppose $M \in \mathbb{R}^{m \times n}$ and $f(\bs{a}, \bs{b}) = \|\bs{a}\bs{1}^T + \bs{1}\bs{b}^T - M\|^2_F$.
Then we have 
\[ 
f(\bs{a}, \bs{b}) \ge \|M\|_F^2 - \bs{f}^T A^+ \bs{f}
\]
where $\bs{f} = [(M \bs{1})^T, \bs{1}^TM]^T$, $A = \begin{bmatrix} 
n I_{m\times m} & \bs{1}_m \bs{1}_n^T \\
\bs{1}_n \bs{1}_m^T & mI_{n \times n}
\end{bmatrix}$, $A^+$ is the Moore-Penrose inverse of matrix $A$ and $\|M\|_F = \sqrt{\sum_{i=1}^m \sum_{j=1}^n M_{ij}^2}$ is Frobenius norm. In particular, if $M$ can not be written as $M = \bs{a}\bs{1}^T + \bs{1}\bs{b}^T$, the lower bound is strictly positive, i.e.,
\[ f(\bs{a}, \bs{b}) \ge \|M\|_F^2 - \bs{f}^T A^+ \bs{f} > 0
\]
\end{lemma}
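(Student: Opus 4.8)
The plan is to recognize $f$ as a degenerate quadratic form in the stacked variable $\bs{z} = [\bs{a}^T,\bs{b}^T]^T \in \Rbb^{m+n}$ and then minimize it by completing the square with the Moore--Penrose inverse. First I would expand
\[
f(\bs{a},\bs{b}) = \|\bs{a}\bs{1}^T\|_F^2 + \|\bs{1}\bs{b}^T\|_F^2 + 2\langle \bs{a}\bs{1}^T, \bs{1}\bs{b}^T\rangle - 2\langle \bs{a}\bs{1}^T, M\rangle - 2\langle \bs{1}\bs{b}^T, M\rangle + \|M\|_F^2 .
\]
Evaluating the inner products term-by-term gives $\|\bs{a}\bs{1}^T\|_F^2 = n\|\bs{a}\|_2^2$, $\|\bs{1}\bs{b}^T\|_F^2 = m\|\bs{b}\|_2^2$, $\langle \bs{a}\bs{1}^T, \bs{1}\bs{b}^T\rangle = (\bs{1}^T\bs{a})(\bs{1}^T\bs{b})$, $\langle \bs{a}\bs{1}^T, M\rangle = \bs{a}^T(M\bs{1})$ and $\langle \bs{1}\bs{b}^T, M\rangle = \bs{b}^T(M^T\bs{1})$, which assemble exactly into
\[
f(\bs{z}) = \bs{z}^T A\bs{z} - 2\bs{z}^T\bs{f} + \|M\|_F^2,
\]
with $A$ and $\bs{f}$ as stated. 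Crucially, $\bs{z}^T A\bs{z} = \|\bs{a}\bs{1}^T + \bs{1}\bs{b}^T\|_F^2 \ge 0$, so $A$ is symmetric positive semidefinite.

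Next I would invoke the standard identity for degenerate quadratics: if $A \succeq 0$ and $\bs{f} \in \mathrm{range}(A)$, then completing the square about $\bs{z}_0 = A^+\bs{f}$ (using $AA^+\bs{f}=\bs{f}$, $A^+AA^+=A^+$ and the symmetry of the projector $A^+A$) yields $\bs{z}^T A\bs{z} - 2\bs{z}^T\bs{f} = (\bs{z}-\bs{z}_0)^T A(\bs{z}-\bs{z}_0) - \bs{f}^TA^+\bs{f} \ge -\bs{f}^TA^+\bs{f}$. Adding $\|M\|_F^2$ gives precisely the claimed bound, with equality at $\bs{z}_0$. Thus the entire statement reduces to the single feasibility check $\bs{f}\in\mathrm{range}(A)$.

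This membership is the heart of the argument, and I expect it to be the main obstacle: $A$ is genuinely singular --- reflecting the gauge freedom $(\bs{a},\bs{b})\mapsto(\bs{a}+c\bs{1},\bs{b}-c\bs{1})$, which leaves $\bs{a}\bs{1}^T+\bs{1}\bs{b}^T$ unchanged --- so the ordinary inverse is unavailable and I must establish feasibility by hand. Concretely I would solve $A[\bs{x};\bs{y}]=0$ to show $\mathrm{null}(A) = \mathrm{span}\{[\bs{1}_m^T,-\bs{1}_n^T]^T\}$ is one-dimensional, so that, $A$ being symmetric, $\mathrm{range}(A) = \mathrm{null}(A)^\perp$. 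I would then verify $\bs{f}^T[\bs{1}_m^T,-\bs{1}_n^T]^T = \bs{1}_m^TM\bs{1}_n - \bs{1}_n^TM^T\bs{1}_m = 0$, which holds because both terms equal the sum of all entries of $M$; hence $\bs{f}\perp\mathrm{null}(A)$ and $\bs{f}\in\mathrm{range}(A)$, as needed.

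For the final ``in particular'' claim I would observe that the minimal value $\|M\|_F^2 - \bs{f}^TA^+\bs{f}$ is exactly $\min_{\bs{a},\bs{b}}\|\bs{a}\bs{1}^T+\bs{1}\bs{b}^T - M\|_F^2$, i.e. the squared Frobenius distance from $M$ to the linear subspace $S = \{\bs{a}\bs{1}^T + \bs{1}\bs{b}^T : \bs{a}\in\Rbb^m,\ \bs{b}\in\Rbb^n\}$. This distance vanishes if and only if $M\in S$, so whenever $M$ cannot be written as $\bs{a}\bs{1}^T+\bs{1}\bs{b}^T$ it is strictly positive; since every $f(\bs{a},\bs{b})$ dominates this minimum, the strict inequality follows.
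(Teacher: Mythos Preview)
Your proposal is correct and follows essentially the same route as the paper: write $f$ as the quadratic $\bs{z}^T A\bs{z} - 2\bs{f}^T\bs{z} + \|M\|_F^2$, identify the one-dimensional null space of $A$ as $\mathrm{span}\{[\bs{1}_m^T,-\bs{1}_n^T]^T\}$, check $\bs{f}\perp\mathrm{null}(A)$, and read off the minimum as $\|M\|_F^2-\bs{f}^TA^+\bs{f}$; the strict-positivity clause is argued identically. Your write-up is in fact more detailed than the paper's---you spell out the expansion, the completion of the square, and the feasibility check explicitly---but the underlying argument is the same.
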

\begin{proof}
See Appendix \ref{app:2}.
\end{proof}

\begin{proposition}\label{prop:2.5}
Suppose $\pi_0 \in \mathbb{R}^{m \times n}$ is the ground truth matching matrix, $\hat{\pi} \in \mathbb{R}^{m \times n}$ is an empirical matching matrix. Let $C_0$ be the ground truth cost matrix giving rise to $\pi_0$ and $C_{IOT}$ be the learned cost matrix via IOT formulation that gives rise to $\hat{\pi}$, i.e. $\pi_0 = \pi^\lambda(C_0, \pi_0\bs{1}, \pi_0^T\bs{1})$ and $\hat{\pi} = \pi^\lambda(C_{IOT}, \hat{\pi}\bs{1}, \hat{\pi}^T\bs{1})$. Denote $\Delta C = C_0 - C_{IOT}$and $\Delta \log \pi = \log \pi_0 - \log \hat{\pi}$ and further assume $(\Delta \log \pi)_{ij}$ are independent (absolutely) continuous random variables (w.r.t. Lebesgue meaure), we have 
\begin{equation}
\|\Delta C\|^2_F \ge \frac{1}{\lambda^2}(\|\Delta \log \pi\|^2_F - \bs{f}^T A^+ \bs{f} ) > 0 \qquad \mbox{a.e.}
\end{equation}
where $\bs{f} = [(\Delta \log \pi\bs{1})^T, \bs{1}^T \Delta \log \pi]^T$, $A = \begin{bmatrix} 
n I_{m\times m} & \bs{1}_m \bs{1}_n^T \\
\bs{1}_n \bs{1}_m^T & mI_{n \times n}
\end{bmatrix}$, $A^+$ is the Moore-Penrose inverse of matrix $A$ and $\|M\|_F = \sqrt{\sum_{i=1}^m \sum_{j=1}^n M_{ij}^2}$ is Frobenius norm.
\end{proposition}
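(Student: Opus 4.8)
The plan is to exploit the explicit optimality characterization of the ROT plan established in the proof of Proposition \ref{prop:1}, namely that any ROT plan satisfies $\pi^\lambda(C,\bs{\mu},\bs{\nu}) = \exp(\lambda(-C + \bs{a}\bs{1}^T + \bs{1}\bs{b}^T))$ for suitable dual vectors $\bs{a},\bs{b}$. Applying this to both $\pi_0$ and $\hat{\pi}$ and taking componentwise logarithms yields $\log\pi_0 = \lambda(-C_0 + \bs{a}_0\bs{1}^T + \bs{1}\bs{b}_0^T)$ and $\log\hat{\pi} = \lambda(-C_{IOT} + \bs{a}_1\bs{1}^T + \bs{1}\bs{b}_1^T)$. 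Subtracting these two identities gives $\Delta\log\pi = \lambda(-\Delta C + \Delta\bs{a}\,\bs{1}^T + \bs{1}\,\Delta\bs{b}^T)$, where $\Delta\bs{a} = \bs{a}_0 - \bs{a}_1$ and $\Delta\bs{b} = \bs{b}_0 - \bs{b}_1$. Rearranging and absorbing the scalar $\lambda$ into the dual differences, I obtain $\lambda\,\Delta C = \bs{a}'\bs{1}^T + \bs{1}\bs{b}'^T - \Delta\log\pi$ with $\bs{a}' = \lambda\Delta\bs{a}$, $\bs{b}' = \lambda\Delta\bs{b}$.

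Next I would take the squared Frobenius norm of both sides. The crucial observation is that $\bs{a}'$ and $\bs{b}'$ are particular fixed vectors, so the resulting quantity is bounded below by the infimum over all rank-one-sum corrections, i.e. $\lambda^2\|\Delta C\|_F^2 = \|\bs{a}'\bs{1}^T + \bs{1}\bs{b}'^T - \Delta\log\pi\|_F^2 \ge \min_{\bs{a},\bs{b}}\|\bs{a}\bs{1}^T + \bs{1}\bs{b}^T - \Delta\log\pi\|_F^2$. This right-hand side is precisely the objective $f(\bs{a},\bs{b})$ of Lemma \ref{lemma:2} evaluated with $M = \Delta\log\pi$. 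Invoking the lemma immediately produces $\lambda^2\|\Delta C\|_F^2 \ge \|\Delta\log\pi\|_F^2 - \bs{f}^T A^+ \bs{f}$ with $\bs{f}$ and $A$ exactly as in the statement, and dividing by $\lambda^2$ delivers the main inequality.

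For the strict positivity claim I would invoke the second part of Lemma \ref{lemma:2}: the lower bound is strictly positive unless $\Delta\log\pi$ lies in the linear subspace $V := \{\bs{a}\bs{1}^T + \bs{1}\bs{b}^T : \bs{a}\in\mathbb{R}^m,\ \bs{b}\in\mathbb{R}^n\}$. The linear map $(\bs{a},\bs{b})\mapsto \bs{a}\bs{1}^T + \bs{1}\bs{b}^T$ has a one-dimensional kernel (spanned by $(\bs{1},-\bs{1})$, since $a_i + b_j \equiv 0$ forces $\bs{a},\bs{b}$ to be opposite constants), so $\dim V = m+n-1$, which is strictly smaller than the ambient dimension $mn$ whenever $m,n\ge 2$; hence $V$ is a Lebesgue-null set in $\mathbb{R}^{m\times n}$. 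Since the entries $(\Delta\log\pi)_{ij}$ are assumed independent and absolutely continuous, the joint law of $\Delta\log\pi$ is absolutely continuous on $\mathbb{R}^{mn}$ and therefore assigns zero probability to this null set. Consequently $\Delta\log\pi$ fails to lie in $V$ almost everywhere, so the bound is strictly positive a.e., completing the argument.

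I expect the only genuinely delicate step to be the dimension and measure bookkeeping in the strict positivity part---specifically verifying that $\dim V = m+n-1$ and that absolute continuity of the independent entries transfers to the joint law being mutually singular with the lower-dimensional subspace $V$. By contrast, the algebraic steps leading to the main inequality are a direct and mechanical consequence of the optimality condition from Proposition \ref{prop:1} together with Lemma \ref{lemma:2}, so I anticipate no obstacle there.
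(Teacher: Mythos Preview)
Your proposal is correct and follows essentially the same skeleton as the paper: derive $\Delta C = \Delta\bs{a}\,\bs{1}^T + \bs{1}\,\Delta\bs{b}^T - \tfrac{1}{\lambda}\Delta\log\pi$ from the ROT optimality condition, then invoke Lemma~\ref{lemma:2} with $M = \Delta\log\pi$. The algebraic portion is identical up to notation.

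Where you differ is in the strict-positivity step. The paper argues that matrices of the form $\bs{a}\bs{1}^T + \bs{1}\bs{b}^T$ are singular, so $\{\Delta\log\pi \in V\} \subset \{\det(\Delta\log\pi)=0\}$, and the latter has Lebesgue measure zero. Your dimension-counting argument---observing that $V$ is an $(m+n-1)$-dimensional subspace of $\mathbb{R}^{mn}$, hence Lebesgue-null once $m,n\ge 2$---is both cleaner and strictly more general. The determinant route implicitly requires $m=n$ (otherwise $\det$ is undefined) and in fact breaks down when $m=n=2$: the matrix $\bs{a}\bs{1}^T + \bs{1}\bs{b}^T$ then has determinant $(a_1-a_2)(b_2-b_1)$, which is generically nonzero, so the containment $V\subset\{\det=0\}$ fails. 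Your argument avoids this defect entirely and delivers the conclusion for all $m,n\ge 2$.
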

\begin{proof}
By the optimality condition of ROT,  we know that there exist $\bs{a}, \bs{b}$ such that 
\[
\pi^\lambda(C, \bs{\mu}, \bs{\nu}) = \exp(\lambda(-C + \bs{a}^T\bs{1} + \bs{1}\bs{b}^T))
\]
hence there exist $\bs{a}_0, \bs{b}_0, \hat{\bs{a}}, \hat{\bs{b}}$ such that 
\begin{align*}
C_0 &= \bs{a}_0\bs{1}^T + \bs{1}\bs{b}_0^T - \frac{1}{\lambda}\log\pi_0 \\
C_{IOT} &= \hat{\bs{a}}\bs{1}^T + \bs{1}\hat{\bs{b}}^T - \frac{1}{\lambda}\log\hat{\pi}
\end{align*}
Take difference and denote $\bs{a}_0 - \hat{\bs{a}}, \bs{b}_0 - \hat{\bs{b}}$ by $\Delta \bs{a}, \Delta \bs{b}$ respectively, we have 
\[
\Delta C = \Delta \bs{a} \bs{1}^T + \bs{1} \Delta \bs{b}^T - \frac{1}{\lambda} \Delta \log \pi
\]
Since singular matrices have zero Lebesgue measure and $(\Delta \log \pi)_{ij}$ are independent continuous random variables, we have
\[
\mathbb{P}(\Delta \log \pi = \bs{a}\bs{1}^T + \bs{1}\bs{b}^T) \le \mathbb{P}(\mbox{det}(\Delta \log \pi) = 0) = 0
\]
By lemma \ref{lemma:2}, we obtain
\[
\|\Delta C\|^2_F \ge \frac{1}{\lambda^2}(\|\Delta \log \pi\|^2_F - \bs{f}^T A^+ \bs{f} ) > 0 \qquad a.e.
\]
\end{proof}

If we use the inaccurate cost matrix learned via IOT approach, it could negatively affect the quality of future matching prediction,  as justified in proposition \ref{prop:predict_pi}. 

\begin{proposition}\label{prop:predict_pi}
Let $C_0$ be any ground truth cost matrix, $C_{IOT}$ be any learned cost matrix via IOT formulation and assume $C_{IOT} \not\in \{C|C = C_0 + \bs{a}\bs{1}^T + \bs{1}\bs{b}^T \mbox{ for some } \bs{a}, \bs{b} \}$. Suppose the ground truth matching matrix is $\pi_0 = \pi^\lambda(C_0, \bs{\mu}, \bs{\nu})$ and the predicted matching matrix is $\pi_{predict} = \pi^\lambda(C_{IOT}, \bs{\mu}, \bs{\nu})$. Denote $\Delta C = C_0 - C_{IOT}$ and $\Delta \log \pi = \log \pi_0 - \log \pi_{predict}$, we have 
\begin{equation}
\|\Delta \log \pi\|^2_F \ge \lambda^2(\|\Delta C\|^2_F - \bs{f}^T A^+ \bs{f}) > 0
\end{equation}
where $\bs{f} = [(\Delta C \bs{1})^T, \bs{1}^T \Delta C]^T$, $A = \begin{bmatrix} 
n I_{m\times m} & \bs{1}_m \bs{1}_n^T \\
\bs{1}_n \bs{1}_m^T & mI_{n \times n}
\end{bmatrix}$, $A^+$ is the Moore-Penrose inverse of matrix $A$ and $\|M\|_F = \sqrt{\sum_{i=1}^m \sum_{j=1}^n M_{ij}^2}$ is Frobenius norm.
\end{proposition}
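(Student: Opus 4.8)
The plan is to mirror the proof of Proposition \ref{prop:2.5}, exploiting the fact that here both $\pi_0$ and $\pi_{predict}$ share the \emph{same} marginals $\bs{\mu}, \bs{\nu}$, so that the two optimality conditions subtract cleanly. First I would invoke the optimality condition of ROT for each cost matrix: there exist Lagrange multipliers $\bs{a}_0, \bs{b}_0$ and $\hat{\bs{a}}, \hat{\bs{b}}$ with $C_0 = \bs{a}_0\bs{1}^T + \bs{1}\bs{b}_0^T - \frac{1}{\lambda}\log\pi_0$ and $C_{IOT} = \hat{\bs{a}}\bs{1}^T + \bs{1}\hat{\bs{b}}^T - \frac{1}{\lambda}\log\pi_{predict}$. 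Subtracting these and writing $\Delta\bs{a} = \bs{a}_0 - \hat{\bs{a}}$, $\Delta\bs{b} = \bs{b}_0 - \hat{\bs{b}}$ yields the identity $\Delta C = \Delta\bs{a}\bs{1}^T + \bs{1}\Delta\bs{b}^T - \frac{1}{\lambda}\Delta\log\pi$.

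The next step is simply to solve this identity for $\Delta\log\pi$, obtaining $\Delta\log\pi = \lambda(\Delta\bs{a}\bs{1}^T + \bs{1}\Delta\bs{b}^T - \Delta C)$, and then take Frobenius norms: $\|\Delta\log\pi\|_F^2 = \lambda^2\|\Delta\bs{a}\bs{1}^T + \bs{1}\Delta\bs{b}^T - \Delta C\|_F^2$. At this point Lemma \ref{lemma:2} applies verbatim with $M = \Delta C$ and with $\Delta\bs{a}, \Delta\bs{b}$ playing the role of its free vectors; it gives $\|\Delta\bs{a}\bs{1}^T + \bs{1}\Delta\bs{b}^T - \Delta C\|_F^2 \ge \|\Delta C\|_F^2 - \bs{f}^T A^+\bs{f}$ with exactly the $\bs{f}$ and $A$ stated in the proposition. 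Multiplying through by $\lambda^2$ produces the claimed inequality $\|\Delta\log\pi\|_F^2 \ge \lambda^2(\|\Delta C\|_F^2 - \bs{f}^T A^+\bs{f})$.

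Finally, for the strict positivity of the right-hand side I would translate the hypothesis: the assumption $C_{IOT}\notin\{C_0 + \bs{a}\bs{1}^T + \bs{1}\bs{b}^T\}$ says precisely that $\Delta C = C_0 - C_{IOT}$ cannot be written in the form $\bs{a}\bs{1}^T + \bs{1}\bs{b}^T$. The ``in particular'' clause of Lemma \ref{lemma:2} then delivers $\|\Delta C\|_F^2 - \bs{f}^T A^+\bs{f} > 0$ directly, closing the chain.

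I do not anticipate a genuine obstacle. In contrast to Proposition \ref{prop:2.5}, whose strict inequality required a measure-theoretic argument (generic non-degeneracy of $\Delta\log\pi$, forced because the marginals of $\pi_0$ and $\hat{\pi}$ differed), the present statement fixes the marginals and supplies the non-degeneracy of $\Delta C$ as an explicit deterministic hypothesis, so strictness falls out immediately. The only point requiring care is the bookkeeping that the vectors $\Delta\bs{a}, \Delta\bs{b}$ emerging from the subtracted optimality conditions are unconstrained real vectors, so that the bound of Lemma \ref{lemma:2} (which minimizes over all such vectors) legitimately applies to this particular pair.
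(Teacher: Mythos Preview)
Your proposal is correct and matches the paper's own approach: the paper simply states that the proof is ``almost identical to that of Proposition~\ref{prop:2.5} except for interchanging the role of $\Delta C$ and $\Delta\log\pi$'' and omits the details you have spelled out. Your handling of the strict inequality via the deterministic hypothesis on $\Delta C$ (replacing the measure-theoretic argument of Proposition~\ref{prop:2.5}) is exactly the right adaptation.
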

\begin{proof}
The proof is almost identical to that of proposition \ref{prop:2.5} except for interchanging the role of $\Delta C$ and $\Delta \log \pi$. We hence omit the details here.
\end{proof}

To address aforementioned issues, we hence propose a robust formulation with Wasserstein marginal relaxation, dropping the hard marginal constraint. Concretely, we consider the following optimization problem.
\begin{equation}\label{eq:primal}
\min_{A, \bs{\mu}\in \Sigma_m, \bs{\nu}\in\Sigma_n} -\sum_{i=1}^m \sum_{j=1}^n \hat{\pi}_{ij}\log\pi_{ij}+  \delta \big(d_{\lambda_u}(C_u, \bs{\mu}, \hat{\bs{\mu}}) + d_{\lambda_v}(C_v, \bs{\nu}, \hat{\bs{\nu}})\big)
\end{equation}
where $\pi = \pi^\lambda(C(A), \bs{\mu}, \bs{\nu})$ is the regularized optimal transport plan, $\delta$ is the relaxation parameter controlling the fitness of marginals, $\lambda, \lambda_u, \lambda_v$ are hyper-parameters controlling the regularity of regularized Wasserstein distance. 
We refer this formulation as robust inverse optimal transport (RIOT) 
in the sequel. Interestingly, we note that \citet{chizat2016scaling} proposed a similar but different formulation in solving unbalanced optimal transport problem.


The intuition of this RIOT formulation is that instead of enforcing noisy empirical marginals as hard constraints, we incorporate them as soft constraints in objective function. We use regularized Wasserstein distance as regularization because of the following reasons: 
\begin{itemize}
\item as approximated Wasserstein distance, it drives $\bs{\mu}, \bs{\nu}$ to $\hat{\bs{\mu}},\hat{\bs{\nu}}$, but at the same time it also allows some uncertainty hence is able to robustify the result;
\item in presence of missing entries in marginals, Wassertein distance is still well defined while other measures such as KL are not;
\item Wasserstein distance can be applied to continuous, discrete, or even mixed distributions;
\item computation of regularized Wasserstein distance \citep{cuturi2013sinkhorn} is efficient and hence potentially more scalable for large scale problem \eqref{eq:primal} in practice.
\end{itemize}


We assume access to $C_u$ and $C_v$ in our model because learning user-user/item-item similarity is relatively easier than our task, there are many existing work dedicated to that end \citep{cheung2004learning, agarwal2013collaborative} and we want to single out and highlight our main contribution---learning the cost matrix that gives rise to observed matching and leverage it to infer matching for new data sets. In fact, our framework can also be extended to learn $C_u$ and $C_v$ jointly if needed, the optimization algorithm of which tends to be much more complex, though. See Appendix \ref{app:extension} for the extension. We postpone the detailed algorithmic derivation of the solution to \eqref{eq:primal} to next section.


\subsection{Predict New Matching}
After obtaining interaction matrix $A$ from solving RIOT, we may then leverage it to predict new matching. Concretely, for a group of new users $\{\tilde{\bs{u}}_i\}_{i \in [m]}$ and items $\{\tilde{\bs{v}}_j\}_{j \in [n]}$, two marginal distributions, i.e., users profile distribution $\tilde{\bs{\mu}}$ and item profile distribution $\tilde{\bs{\nu}}$ can be easily obtained. First compute the cost matrix $\tilde{C}_{ij} = f(\tilde{\bs{u}}_i^T A \tilde{\bs{v}}_j)$ using kernel representation and apply Sinkhorn-Knopp algorithm to computing $\tilde{\pi}^\lambda(\tilde{C}, \tilde{\bs{\mu}}, \tilde{\bs{\nu}})$, which gives us the predicted matching of the given groups of users and items. \\

\noindent See Figure \ref{fig:illustration} for illustration of the complete pipeline or proposed learning-to-match framework.

\begin{figure}[ht]
\begin{center}
\centerline{\includegraphics[width=1.5\columnwidth]{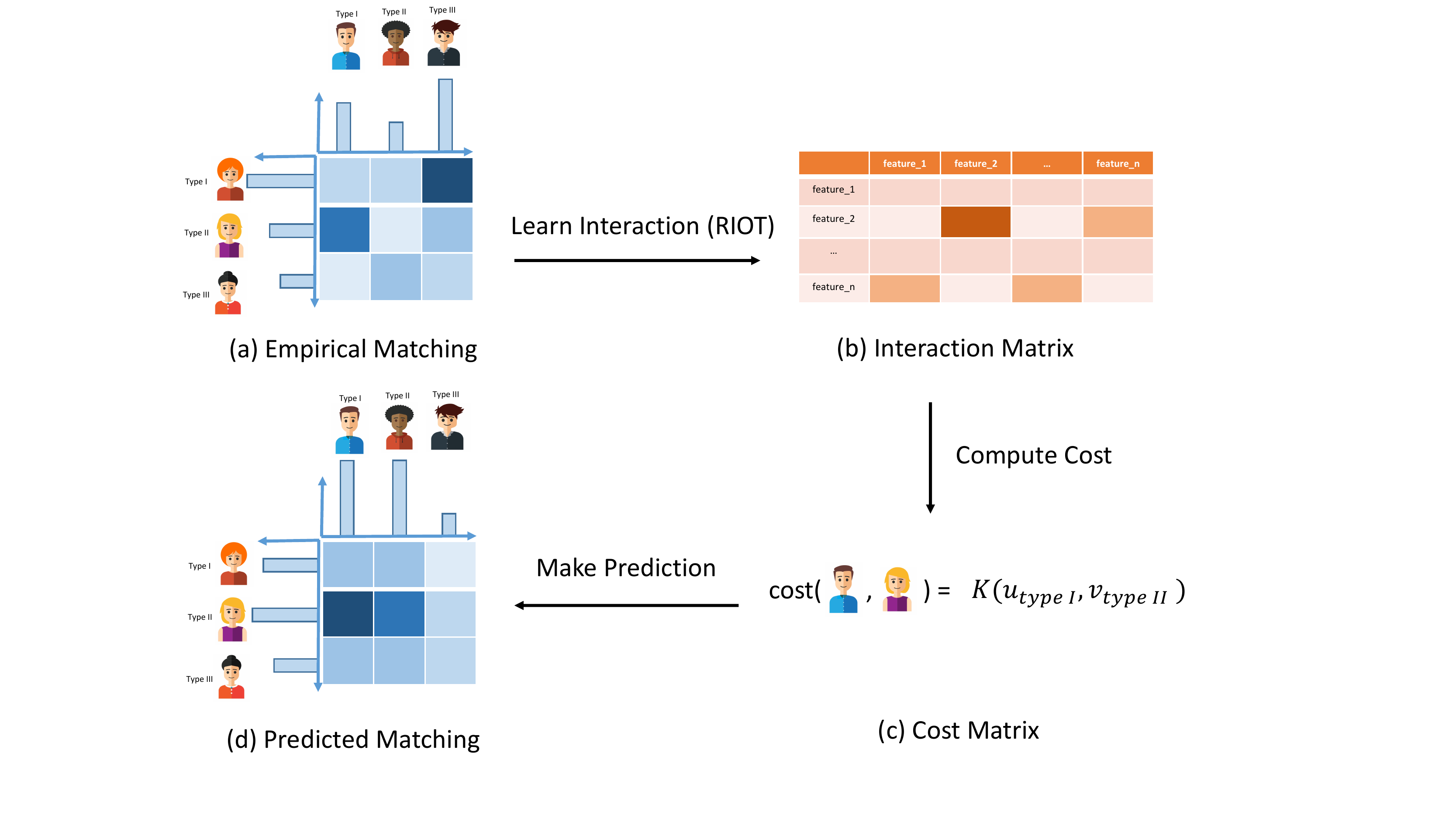}}
\caption{From noisy (a) empirical matching matrix, we learn (b) the interaction matrix via our proposed RIOT formulation. We then use kernel representation to compute (c) cost matrix and predict (d) matching matrix for new data.  $\bs{u}_{type\, I}, \bs{v}_{type\, II}$ are feature vectors of type I men and type II women.}
\label{fig:illustration}
\end{center}
\end{figure}

\section{Derivation of Optimization Algorithm}\label{derivation}

Since the constraint set of ROT problem satisfies Slater's condition \citep{boyd2004convex}, we have by strong duality that
\begin{equation*}
d_\lambda(C, \bs{\mu}, \bs{\nu}) = \max_{\bs{z}} \langle \bs{z}, \bs{\mu}\rangle + \langle \bs{z}^C, \bs{\nu} \rangle - \frac{1}{\lambda}
\end{equation*}
where $z^C_j = \frac{1}{\lambda}\log c_j-\frac{1}{\lambda} \log(\sum_{i=1}^m e^{\lambda(z_i - C_{ij})})$. $\bs{z},\bs{z}^C$ are essentially the Lagrangian multipliers corresponding to constraints $\pi\bs{1}=\bs{\mu}$ and $\pi^T\bs{1}=\bs{\nu}$. See also \cite{genevay2016stochastic}. Hence we have
\begin{equation*}
d_{\lambda_u}(C_u, \bs{\mu}, \hat{\bs{\mu}}) = \max_{\bs{z}} \langle \bs{z}, \bs{\mu}\rangle + \langle \bs{z}^{C_u}, \hat{\bs{\mu}} \rangle - \frac{1}{\lambda_u}
\end{equation*}
\begin{equation*}
d_{\lambda_v}(C_v, \bs{\nu}, \hat{\bs{\nu}}) = \max_{\bs{w}} \langle \bs{w}, \bs{\nu}\rangle + \langle \bs{w}^{C_v}, \hat{\bs{\nu}} \rangle - \frac{1}{\lambda_v}
\end{equation*}
where $z^{C_u}_j = \frac{1}{\lambda_u}\log \hat{r}_j-\frac{1}{\lambda_u} \log(\sum_{i=1}^m e^{\lambda_u(z_i - {C_u}_{ij})})$ and $w^{C_v}_j = \frac{1}{\lambda_v}\log \hat{c}_j-\frac{1}{\lambda_v} \log(\sum_{i=1}^n e^{\lambda_v(w_i - {C_v}_{ij})})$. 
Given sample marginals, once $\bs{z}, \bs{w}$ are fixed, $\bs{z}^{C_u}, \bs{w}^{C_v}$ are also fixed. We can then convert \eqref{eq:primal} into a min-max problem
\begin{equation}\label{eq:dual}
\min_{A,\bs{\mu},\bs{\nu}}\max_{\bs{z},\bs{w}} -\sum_{i=1}^m \sum_{j=1}^n \hat{\pi}_{ij}\log\pi_{ij} + \delta \big( \langle \bs{z},\bs{\mu}\rangle + \langle \bs{z}^{C_u},\hat{\bs{\mu}}\rangle
+\langle \bs{w},\bs{\nu}\rangle + \langle \bs{w}^{C_v},\hat{\bs{\nu}}\rangle \big)
\end{equation}
where constants are omitted. The optimal solution is a saddle-point
of the objective in \eqref{eq:dual}. To solve this min-max problem, we alternately update the primal variable
$(A, \bs{\mu}, \bs{\nu})$ and dual variable $(\bs{z}, \bs{w})$, each time with the other ones fixed. 

\subsection{Update $(A, \bs{\mu}, \bs{\nu})$ for fixed $(\bs{z}, \bs{w})$} \label{subsection:update1}
Now $\bs{z}, \bs{w}, \bs{z}^{C_u},\bs{w}^{C_v}$ are all fixed.
Note that 
\[ 
\pi_{ij} = e^{\lambda(a_i+b_j-C_{ij})}
\]
for some positive vectors $\bs{a},\bs{b}$, such that
$\pi\bs{1} =\bs{\mu}$, $\pi^T\bs{1}=\bs{\nu}$, and $\bs{1}^T\pi\bs{1}=1$.
Thus we may rewrite the minimization in this stage as

\begin{equation}\label{eq:step1}
\begin{array}{ll}
\displaystyle \min_{A,\bs{a},\bs{b}} & \displaystyle \sum_{j=1}^n \hat{\pi}_{ij}\log\pi_{ij}
+\delta(\langle z,\pi \bs{1}\rangle 
+\langle w,\pi^T \bs{1}\rangle)\\
\displaystyle \textrm{s.t.} & \displaystyle \sum_{i=1}^m \sum_{j=1}^ne^{\lambda(a_i+b_j-C_{ij})}=1 \\
\end{array}
\end{equation}

Recall that the ultimate goal in this step is to find the interaction matrix $A$ that cost $C$ depends on, such that the minimum above can be attained. For any $A$, we have kernel representation $C(A)$ parameterized by interaction matrix $A$. Therefore the minimization above is equivalent to
$
\min_A  E(C(A)) 
$.
To minimize $E(C(A))$, the critical step is to evaluate gradient $\nabla_A E(C(A))$ and by envelope theorem \citep{milgrom2002envelope} we have
\begin{proposition}
The gradient $\nabla_A E(C(A))$ is 
\begin{equation*}
\nabla_A E = \sum_{i=1}^m \sum_{j=1}^n \lambda[\hat{\pi}_{ij} + (\theta - \delta(z_i + w_j)) \pi_{ij}] C_{ij}^{\prime}(A)
\end{equation*}
where $\theta$ is the Lagrangian multiplier of the constrained
minimization problem in equation \eqref{eq:step1}.
\end{proposition}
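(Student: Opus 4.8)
The plan is to recognize $E(C(A))$ as the optimal value of the constrained inner minimization \eqref{eq:step1} over the scaling vectors $\bs{a}, \bs{b}$, and then to differentiate this value function with respect to $A$ using the envelope theorem \citep{milgrom2002envelope}. First I would form the Lagrangian of \eqref{eq:step1},
\[
\mathcal{L}(\bs{a}, \bs{b}, \theta; A) = E_0(\bs{a}, \bs{b}; A) - \theta\Big(\sum_{i,j} e^{\lambda(a_i + b_j - C_{ij})} - 1\Big),
\]
where $E_0$ denotes the inner objective and $\theta$ is the multiplier enforcing the normalization $\bs{1}^T\pi\bs{1}=1$. Because the feasible set is cut out by a single smooth equality constraint whose gradient in $(\bs{a},\bs{b})$ equals $\lambda[\bs{\mu}^T,\bs{\nu}^T]^T$ and hence never vanishes (all $\pi_{ij}>0$), the constraint qualification holds and a multiplier $\theta$ exists at the optimum $(\bs{a}^\star,\bs{b}^\star)$. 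The envelope theorem then gives $\nabla_A E = \nabla_A \mathcal{L}$ evaluated at $(\bs{a}^\star,\bs{b}^\star,\theta)$ with only the \emph{explicit} $A$-dependence retained, since the first-order stationarity conditions in $(\bs{a},\bs{b})$ annihilate the implicit contributions of the solution map.

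The second step is the explicit differentiation. Using the parametrization $\pi_{ij} = e^{\lambda(a_i+b_j-C_{ij})}$, I would record the two elementary chain-rule identities $\partial \log\pi_{ij}/\partial A = -\lambda\, C'_{ij}(A)$ and $\partial \pi_{ij}/\partial A = -\lambda\,\pi_{ij}\,C'_{ij}(A)$, both obtained through $C_{ij}=C_{ij}(A)$. Substituting $\log\pi_{ij} = \lambda(a_i+b_j-C_{ij})$ into the log-likelihood term and rewriting $\langle \bs{z},\pi\bs{1}\rangle + \langle \bs{w},\pi^T\bs{1}\rangle = \sum_{i,j}(z_i+w_j)\pi_{ij}$ turns $\mathcal{L}$ into an explicit double sum over $(i,j)$. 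Differentiating term by term and collecting the common factor $\lambda\, C'_{ij}(A)$ then produces
\[
\nabla_A E = \sum_{i,j}\lambda\big[\hat{\pi}_{ij} + (\theta - \delta(z_i+w_j))\pi_{ij}\big] C'_{ij}(A),
\]
where the three bracketed contributions arise respectively from the negative log-likelihood, the marginal-relaxation term weighted by $\delta(z_i+w_j)$, and the normalization constraint weighted by $\theta$.

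I expect the main obstacle to be the careful justification of the envelope step rather than the algebra. One must argue that the inner problem attains its minimum at an interior stationary point with a well-defined multiplier, so that the total derivative of the value function collapses to the partial derivative of $\mathcal{L}$; this is exactly where the strict convexity of the ROT objective and the Slater condition already invoked at the start of this section do the work. A secondary point worth flagging is a sign-bookkeeping consistency: the stated gradient corresponds to the objective $-\sum_{i,j}\hat{\pi}_{ij}\log\pi_{ij} + \delta(\cdots)$ of \eqref{eq:dual}, so the leading $+\hat{\pi}_{ij}$ in the bracket comes from differentiating the \emph{negative} log-likelihood, and the sign chosen for $\theta$ in the Lagrangian must be matched accordingly to reproduce the $+\theta\pi_{ij}$ term.
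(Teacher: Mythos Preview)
Your proposal is correct and follows essentially the same route as the paper: both form the Lagrangian of \eqref{eq:step1}, invoke the envelope theorem \citep{milgrom2002envelope} to suppress the implicit dependence on the optimal scalings $(\bs{a},\bs{b})$, and then differentiate explicitly through $\pi_{ij}=e^{\lambda(a_i+b_j-C_{ij})}$. The only cosmetic difference is that the paper first factors $\nabla_A E=\sum_{i,j}(\partial E/\partial C_{ij})\,C'_{ij}(A)$ via the chain rule and applies the envelope theorem in $C_{ij}$, whereas you apply the envelope theorem directly in $A$; your additional remarks on constraint qualification and sign bookkeeping are sound and not spelled out in the paper's argument.
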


\begin{proof}
By chain rule, we have that 
\begin{equation*}
    \nabla_A E =\sum_{i=1}^m \sum_{j=1}^n \frac{\partial E}{\partial C_{ij}}\frac{\partial C_{ij}}{A}
\end{equation*}
With the kernel representation, $C_{ij}^\prime(A)$ is easily available. For fixed $C=C(A)$, by envelop theorem \citep{milgrom2002envelope}, we have
\begin{equation*}\label{eq:gradE}
\begin{split}
\nabla_{C_{ij}} E(C) &= \frac{\partial}{\partial C} -\sum_{i=1}^m \sum_{j=1}^n\hat{\pi}_{ij}\log\pi_{ij} + \delta\langle \bs{z},\pi \bs{1}\rangle + \delta\langle \bs{w},\pi^T \bs{1}\rangle - \theta(\sum_{i,j=1}^{m,n}e^{\lambda(a_i+b_j-C_{ij})})\\
&= (-\frac{\hat{\pi}_{ij}}{\pi_{ij}} +\delta
(z_i + w_j) - \theta) \frac{\partial \pi_{ij}}{\partial C_{ij}}\\
&=\lambda[\hat{\pi}_{ij} + (\theta - \delta(z_i + w_j))\pi_{ij}]
\end{split}
\end{equation*}
\end{proof}

Hence in each evaluation of $\nabla_C E$, we need to solve $E(C(A))$ once. If we denote $\xi_i=e^{\lambda a_i}$,
$\eta_j=e^{\lambda b_j}$,  $Z_{ij}=e^{-\lambda C_{ij}}$ and $M_{ij}=\delta(z_i + w_j)Z_{ij}$,
then computing $E(C(A))$ is equivalent to solving 

\begin{equation}\label{eq:step2}
\begin{array}{ll}
\displaystyle \min_{\bs{\xi}, \bs{\eta}} & \displaystyle -\langle \hat{\bs{\mu}},\log\bs{\xi} \rangle
-\langle \hat{\bs{\nu}},\log\bs{\eta} \rangle + \bs{\xi}^T M \bs{\eta}\\
\displaystyle \textrm{s.t.} & \displaystyle \bs{\xi}^TZ\bs{\eta}=1 \\
\end{array}
\end{equation}

Note that this is a non-convex optimization problem, both the objective function and constraints are non-convex which is difficult to solve in general. However, once we fix $\bs{\eta}$, the problem with respect to $\bs{\xi}$ alone is a convex problem and vice versa. We can solve this problem efficiently by alternately updating $\bs{\xi},\bs{\eta}$.
\begin{proposition}
Denote the objective in equation \eqref{eq:step2} by 
\[ 
h(\bs{\xi}, \bs{\eta}) = -\langle \hat{\bs{\mu}},\log\bs{\xi} \rangle
-\langle \hat{\bs{\nu}},\log\bs{\eta} \rangle
+ \bs{\xi}^T M \bs{\eta}
\] 
Initialize $\bs{\xi}^{(0)},\bs{\eta}^{(0)}$ and alternately update $\bs{\xi}^{(k)},\bs{\eta}^{(k)}$ in the following fashion
\begin{equation}\label{eq:min1}
\bs{\xi}^{(k)} = \argmin_{\bs{\xi}^T Z\bs{\eta}^{(k-1)}=1} h(\bs{\xi}, \bs{\eta}^{(k-1)})
\end{equation}
\begin{equation}\label{eq:min2}
\bs{\eta}^{(k)} = \argmin_{{\bs{\xi}^{(k)}}^T Z\bs{\eta}=1} h(\bs{\xi}^{(k)}, \bs{\eta})
\end{equation}
If $\lim_{k \to \infty} (\bs{\xi}^{(k)}, \bs{\eta}^{(k)}, \theta_1^{(k)}, \theta_2^{(k)}) = (\bs{\xi}^\star, \bs{\eta}^\star, \theta_1^\star, \theta_2^\star)$, then $\theta_1^\star = \theta_2^\star$ and $(\bs{\xi}^\star, \bs{\eta}^\star)$ is a local minimizer of $h$, where $\theta_1^{(k)}$ and $\theta_2^{(k)}$ are Lagrangian multipliers corresponding to problem \eqref{eq:min1} and \eqref{eq:min2} respectively.
\end{proposition}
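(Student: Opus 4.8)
The plan is to characterize the limit point through the first-order optimality conditions of the two convex subproblems and then reconcile their multipliers. First I would attach Lagrange multipliers $\theta_1^{(k)}$, $\theta_2^{(k)}$ to the equality constraints in \eqref{eq:min1} and \eqref{eq:min2} and write the stationarity conditions. Differentiating the Lagrangian of the $\bs{\xi}$-update, $-\langle\hat{\bs{\mu}},\log\bs{\xi}\rangle+\bs{\xi}^TM\bs{\eta}^{(k-1)}-\theta_1^{(k)}(\bs{\xi}^TZ\bs{\eta}^{(k-1)}-1)$, in $\bs{\xi}$ gives componentwise
\[
-\frac{\hat{\mu}_i}{\xi_i^{(k)}}+(M\bs{\eta}^{(k-1)})_i-\theta_1^{(k)}(Z\bs{\eta}^{(k-1)})_i=0,
\]
and symmetrically the $\bs{\eta}$-update gives $-\hat{\nu}_j/\eta_j^{(k)}+(M^T\bs{\xi}^{(k)})_j-\theta_2^{(k)}(Z^T\bs{\xi}^{(k)})_j=0$. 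Since each subproblem is convex in its active block on a nonempty feasible slice (as already noted before \eqref{eq:step2}), these conditions exactly characterize the iterates, so no second-order check is needed at the subproblem level.

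Next I would pass to the limit. Using $\bs{\xi}^{(k)}\to\bs{\xi}^\star$, $\bs{\eta}^{(k)}\to\bs{\eta}^\star$, $\theta_i^{(k)}\to\theta_i^\star$ together with strict positivity of the iterates (which makes $1/\xi_i$, $1/\eta_j$ continuous at the limit; here I would note this needs $\bs{\xi}^\star,\bs{\eta}^\star>0$ and $\hat{\mu}_i,\hat{\nu}_j>0$), the two displayed identities survive in the limit:
\[
-\frac{\hat{\mu}_i}{\xi_i^\star}+(M\bs{\eta}^\star)_i-\theta_1^\star(Z\bs{\eta}^\star)_i=0,\qquad -\frac{\hat{\nu}_j}{\eta_j^\star}+(M^T\bs{\xi}^\star)_j-\theta_2^\star(Z^T\bs{\xi}^\star)_j=0.
\]
The equality of multipliers then follows from a normalization identity. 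Multiplying the first limiting equation by $\xi_i^\star$ and summing over $i$, I would use $\sum_i\hat{\mu}_i=1$ for the first term and the feasibility $(\bs{\xi}^\star)^TZ\bs{\eta}^\star=1$ for the last, obtaining $\theta_1^\star=(\bs{\xi}^\star)^TM\bs{\eta}^\star-1$; contracting the second equation with $\eta_j^\star$ and using $\sum_j\hat{\nu}_j=1$ gives $\theta_2^\star=(\bs{\xi}^\star)^TM\bs{\eta}^\star-1$. Hence $\theta_1^\star=\theta_2^\star=:\theta^\star$.

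With a common multiplier, the two limiting equations are precisely $\nabla_{\bs{\xi}}L=0$ and $\nabla_{\bs{\eta}}L=0$ for the single Lagrangian $L(\bs{\xi},\bs{\eta},\theta^\star)=h(\bs{\xi},\bs{\eta})-\theta^\star(\bs{\xi}^TZ\bs{\eta}-1)$, and feasibility holds, so $(\bs{\xi}^\star,\bs{\eta}^\star)$ is a KKT (stationary) point of \eqref{eq:step2}. This is the clean, purely first-order part of the argument.

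The delicate part is upgrading this stationary point to a genuine local minimizer, since a block-coordinate (partial) optimum need not be a joint local optimum of a non-convex problem. Here I would invoke the second-order sufficient condition for the constrained problem: the Hessian of $L$ has positive-definite diagonal blocks with entries $\hat{\mu}_i/(\xi_i^\star)^2$ and $\hat{\nu}_j/(\eta_j^\star)^2$ and off-diagonal block $M-\theta^\star Z$, and I would check that its restriction to the constraint tangent space $\{(\bs{d}_\xi,\bs{d}_\eta):(Z\bs{\eta}^\star)^T\bs{d}_\xi+(Z^T\bs{\xi}^\star)^T\bs{d}_\eta=0\}$ is positive definite. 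I expect this to be the main obstacle: the cross block $M-\theta^\star Z$ is not sign-definite, so positive definiteness on the tangent space does not follow from the positive diagonal blocks alone and must either be argued from the structure of $Z$ and $M$ or imposed as a nondegeneracy condition on the limit point. If that condition fails, the honest conclusion is that $(\bs{\xi}^\star,\bs{\eta}^\star)$ is at least a KKT point, and I would state the local-minimizer claim under the corresponding second-order qualification.
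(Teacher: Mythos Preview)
Your argument is essentially the paper's: write the stationarity conditions of the two convex subproblems, pass to the limit, contract the first with $\bs{\xi}^\star$ and the second with $\bs{\eta}^\star$, and use $\bs{1}^T\hat{\bs{\mu}}=\bs{1}^T\hat{\bs{\nu}}=1$ together with $(\bs{\xi}^\star)^TZ\bs{\eta}^\star=1$ to get $\theta_1^\star=\theta_2^\star$; then observe that the two limiting equations are exactly the KKT system of \eqref{eq:step2} with a single multiplier. Your version is slightly sharper in that you exhibit the common value $\theta^\star=(\bs{\xi}^\star)^TM\bs{\eta}^\star-1$ explicitly, whereas the paper only compares the two contracted equations.

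Where you diverge from the paper is your last paragraph, and there you are being more, not less, careful. The paper records the monotone descent $h(\bs{\xi}^{(k)},\bs{\eta}^{(k)})\le h(\bs{\xi}^{(k-1)},\bs{\eta}^{(k-1)})$ and the lower-boundedness of $h$, then, after establishing that $(\bs{\xi}^\star,\bs{\eta}^\star)$ satisfies the KKT conditions, simply asserts that it is a local minimizer. It supplies no second-order argument and does not explain how descent alone rules out a saddle (it cannot, in general, for a nonconvex constrained problem). So the ``obstacle'' you flag---that positive definiteness of the reduced Hessian on the tangent space does not follow automatically from the positive diagonal blocks and may require a nondegeneracy assumption on $M-\theta^\star Z$---is a genuine gap in the paper's own proof, not in yours. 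Your proposed resolution (state the local-minimizer claim under the second-order sufficient condition, otherwise conclude only that the limit is a KKT point) is the honest one.
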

\begin{proof}
From the definition of $E(C(A))$ in equation \eqref{eq:step1} it is easily seen that $h(\bs{\xi}, \bs{\eta})$ is lower bounded. Moreover, since 
\[
h(\bs{\xi}^{(k)}, \bs{\eta}^{(k)}) \le h(\bs{\xi}^{(k)}, \bs{\eta}^{(k-1)}) \le h(\bs{\xi}^{(k-1)}, \bs{\eta}^{(k-1)})
\]
there exists a convergent subsequence of $\{h(\bs{\xi}^{(k)}, \bs{\eta}^{(k)})\}$ and we denote the limit by $h^\star$.

The KKT condition of equation \eqref{eq:min1} and \eqref{eq:min2} are
\begin{align}
-\frac{\hat{\bs{\mu}}}{\bs{\xi}^{(k)}} + M\bs{\eta}^{(k-1)} - \theta_1^{(k)} Z\bs{\eta}^{(k-1)} &= 0 \label{eq:kkt1}\\
{\bs{\xi}^{(k)}}^T Z \bs{\eta}^{(k-1)} = 1 \nonumber \\
-\frac{\hat{\bs{\nu}}}{\bs{\eta}^{(k)}} + M^T\bs{\xi}^{(k)} - \theta_2^{(k)} Z^T\bs{\xi}^{(k)} &= 0 \label{eq:kkt3} \\
{\bs{\xi}^{(k)}}^T Z \bs{\eta}^{(k)} = 1 \nonumber 
\end{align}

Let $k$ tend to infinity and take inner product with $\bs{\xi}^\star$ for equation \eqref{eq:kkt1} and take inner product with $\bs{\eta}^\star$ for equation \eqref{eq:kkt3}, compare two equations and use the fact that both $\hat{\bs{\mu}}$ and $\hat{\bs{\nu}}$ are probability vectors, we find that $\theta_1^\star = \theta_2^\star$ and $(\bs{\xi}^\star, \bs{\eta}^\star)$ solves the KKT condition of constrained problem \eqref{eq:min1} and \eqref{eq:min2}. Therefore, $(\bs{\xi}^\star, \bs{\eta}^\star)$ is a local minimizer and $h^\star = h(\bs{\xi}^\star, \bs{\eta}^\star)$ is a local minimum.
\end{proof}

Once we obtain $(\bs{\xi}^\star, \bs{\eta}^\star, \theta^\star)$, we can then get $(\bs{a},\bs{b},\theta)$ by setting $\bs{a}=\frac{1}{\lambda}\log\bs{\xi}^\star$ and $\bs{b}=\frac{1}{\lambda}\log\bs{\eta}^\star$ and $\theta = \theta^\star$. Then plug in $(\bs{a},\bs{b},\theta)$  to evaluate 
$\nabla_C E$ for the current $C=C(A)$.

A careful analysis of the KKT condition of equations \eqref{eq:min1} and \eqref{eq:min2} shows that $\theta_1^{(k)}$ and $\theta_2^{(k)}$ are roots of 
\[
    p(\theta) = \bigg \langle \frac{\hat{\bs{\mu}} \odot (Z\bs{\eta}^{(k-1)})}{ (M -\theta Z)\bs{\eta}^{(k-1)}}, \bs{1} \bigg \rangle, \enskip q(\theta) = \bigg \langle \frac{\hat{\bs{\nu}} \odot (Z^T\bs{\xi}^{(k)})}{ (M -\theta Z)^T\bs{\xi}^{(k)}}, \bs{1} \bigg \rangle 
\]
respectively. The univariate root finding problem can be solved efficiently by off-the-shelf package. After obtaining $\theta_1^{(k)},\theta_2^{(k)}$, we can update 
\[ 
\bs{\xi}^{(k)} = \frac{\hat{\bs{\mu}}}{(M-\theta_1^{(k)}Z)\bs{\eta}^{(k-1)}}, \enskip \bs{\eta}^{(k)} = \frac{\hat{\bs{\nu}}}{(M-\theta_2^{(k)}Z)^T\bs{\xi}^{(k)}}\] 
directly. Computationally, this approach to solving problem \eqref{eq:min1} and \eqref{eq:min2} is much cheaper than gradient-type iterative methods when $m$ and/or $n$ are large.

\subsection{Update $(\bs{z},\bs{w})$ for fixed $(A,\bs{\mu}, \bs{\nu})$} 
When $(A,\bs{\mu}, \bs{\nu})$ are fixed, $\pi$ is also fixed, we then only need to solve 
\[
\max_{\bs{z},\bs{w}}\langle \bs{z},\pi\bs{1}\rangle + \langle \bs{z}^{C_u},\hat{\bs{\mu}}\rangle
+\langle \bs{w},\pi^T\bs{1}\rangle + \langle \bs{w}^{C_v},\hat{\bs{\nu}}\rangle
\]
and one immediately recognizes this is equivalent to applying Sinkhorn-Knopp algorithm to compute
$d_{\lambda_u}(C_u,\bs{\mu}, \hat{\bs{\mu}})$ and $d_{\lambda_v}(C_v,\bs{\nu}, \hat{\bs{\nu}})$.\\

\noindent To summarize, in each iteration, we perform a gradient-type update for $A$, followed by two calls of Sinkhorn-Knopp algorithm to compute $d_{\lambda_u}(C_u,\bs{\mu}, \hat{\bs{\mu}})$ and $d_{\lambda_v}(C_v,\bs{\nu}, \hat{\bs{\nu}})$. Algorithm \ref{alg:main} details the algorithm.

\begin{algorithm}[h] 
   \caption{Solve RIOT}\label{alg:main}
\begin{algorithmic}
   \STATE {\bfseries Input:} observed matching matrix $\hat{\pi}$, cost matrices $C_u, C_v$, regularization parameter $\lambda, \lambda_u, \lambda_v$
   \FOR{$l = 1,2,\cdots, L$}
   \STATE $Z \gets \exp(-\lambda C)$ 
   \STATE $M \gets \delta (\bs{z}\bs{1}^T + \bs{1}\bs{w}^T) \odot Z$
   \STATE Initialize $\bs{\xi}^{(0)}, \bs{\eta}^{(0)}$
        \FOR{$k = 1,2,\cdots, K$}
        \STATE $\theta_1^{(k)} \gets  \mbox{root of } p(\theta)$
        \STATE $\theta_2^{(k)} \gets \mbox{ root of } q(\theta)$
        \STATE $\bs{\xi}^{(k)} \gets \frac{\hat{\bs{\mu}}}{(M-\theta_1^{(k)}Z)\bs{\eta}^{(k-1)}}$
        \STATE $\bs{\eta}^{(k)} \gets \frac{\hat{\bs{\nu}}}{(M-\theta_2^{(k)}Z)^T\bs{\xi}^{(k)}}$
        \ENDFOR
        \STATE $\bs{a} \gets \frac{1}{\lambda} \log\bs{\xi}^{(k)}, \quad \bs{b} \gets \frac{1}{\lambda} \bs{\eta}^{(k)}, \quad \theta = \theta_1^{(k)}$
        \STATE $\pi \gets \exp(\lambda(\bs{a}\bs{1}^T + \bs{1}\bs{b}^T - C))$
        \STATE $\nabla_A  \gets \displaystyle \sum_{i,j=1}^{m,n}\lambda [\hat{\pi}_{ij} + (\theta-\delta(z_i + w_j)\pi_{ij}]C^\prime_{ij}(A)$
        \STATE $A \gets A - s \nabla_A $ 
        \STATE $\bs{a}_1 \gets \mbox{Sinkhorn-Knopp}(C_u, \pi\bs{1}, \hat{\bs{\mu}}, \lambda_u)[1] $
        \STATE $\bs{a}_2 \gets \mbox{Sinkhorn-Knopp}(C_v, \pi^T\bs{1}, \hat{\bs{\nu}}, \lambda_v)[1]$ 
        \STATE $\bs{z} \gets \frac{1}{\lambda_u} \log \bs{a}_1 $, \quad $\bs{w} \gets \frac{1}{\lambda_v} \log \bs{a}_2 $
   \ENDFOR
\end{algorithmic}
\end{algorithm}

\section{Experiments} \label{experiments}
In this section, we evaluate our proposed RIOT model on both synthetic data and real world data sets. For synthetic data set, we illustrate its robustness against IOT and show our model can achieve better performance in learning cost matrix $C$ than IOT could. For election data set, we show our method can effectively learn meaningful preference of voters based on their demographics. For taxi trip data set, we demonstrate that the proposed model is able to predict matching of  taxi drivers and passengers fairly accurate. For marriage data set, we demonstrate the applicability of RIOT in predicting new matching and make recommendation accordingly by comparing it with baseline and state-of-art recommender systems. 

\subsection{Synthetic Data}
We set $\lambda=\lambda_u=\lambda_v=1$ and simulate $m=20$ user profiles $\{\bs{u}_i\} \subset \mathbb{R}^{10}$, $n=20$ item profiles $\{\bs{v}_j\} \subset \mathbb{R}^{8}$, two probability vectors $\bs{\mu}_0, \bs{\nu}_0 \in \mathbb{R}^{20}$, an interaction matrix $A_0$ of size $10\times 8$ and pick polynomial kernel $k(\bs{x}, \bs{y}) = (\gamma\bs{x}^T\bs{y} + c_0)^d$ where $\gamma=0.05, c_0=1, d=2$, hence ${C_0}_{ij} = (0.05\bs{u}_i^TA\bs{v}_j + 1)^2$. For $C_u, C_v$, we randomly generate $m$ and $n$ points from $\mathcal{N}(\bs{0}, 5I_2)$ on plane and use their Euclidean distance matrix as $C_u$ and $C_v$. The ground truth entropy-regularized optimal transport plan is given by $\pi_0 = \pi^\lambda(C_0, \bs{\mu}_0, \bs{\nu}_0) $.
We then add noise to obtain 
\[ \hat{\pi}_{ij} = \frac{{\pi_0}_{ij} + |\epsilon_{ij}|}{\sum_{i,j=1}^{m,n} {\pi_0}_{ij} + |\epsilon_{ij}|}\] 
where $\epsilon_{ij}$ are independent and identical $\mathcal{N}(0, \sigma^2)$ random variables. In algorithm \ref{alg:main}, we set the number of iterations of inner loop $K=20$.

\begin{figure}[ht]
\begin{center}
\centerline{\includegraphics[width=0.75\columnwidth]{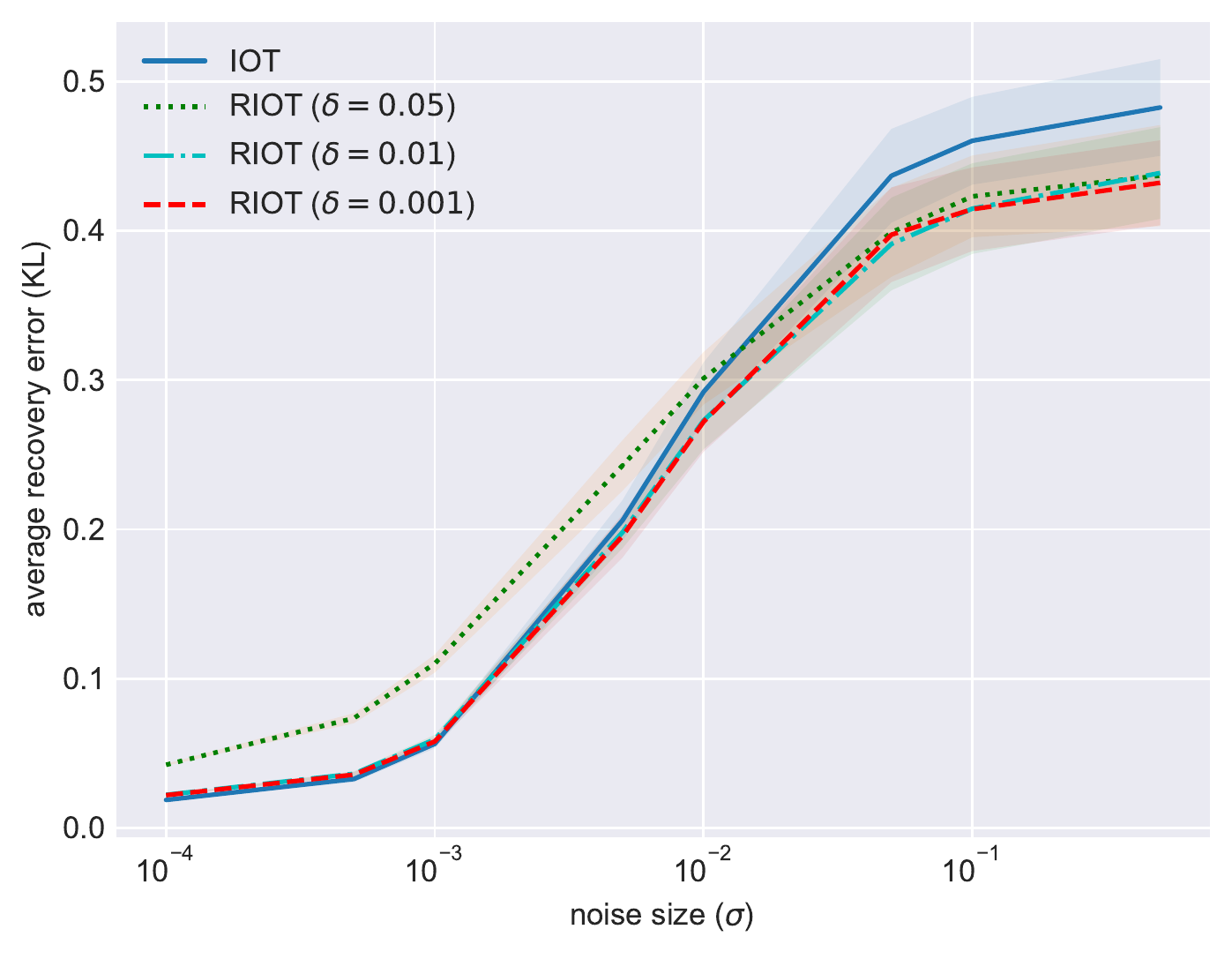}}
\caption{Comparison of recovery performance of fixed marginal approach and marginal relaxation approach. For each noise size $\sigma$, we simulate $\epsilon_{ij}$ and run algorithm \ref{alg:main} 50 times. The shaded region is one standard deviation.}
\label{fig:compare1}
\end{center}
\end{figure}

\begin{figure}[ht]
\centering
\subfigure[$\pi_0$]{%
\label{fig:actual_pi}%
\includegraphics[width=0.25\columnwidth]{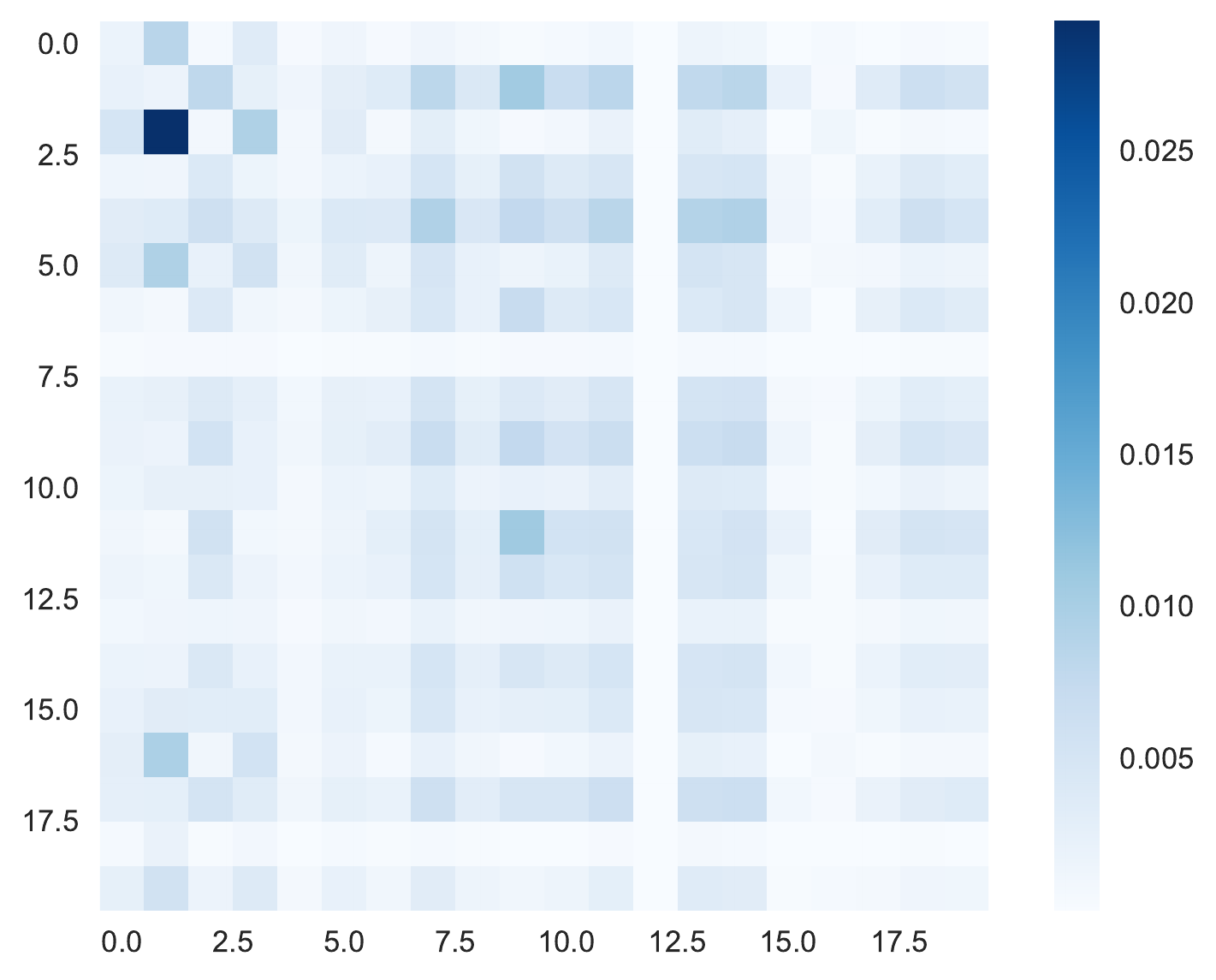}
}%
\subfigure[$\hat{\pi}$ ]{%
\label{fig:sample_pi}%
\includegraphics[width=0.25\columnwidth]{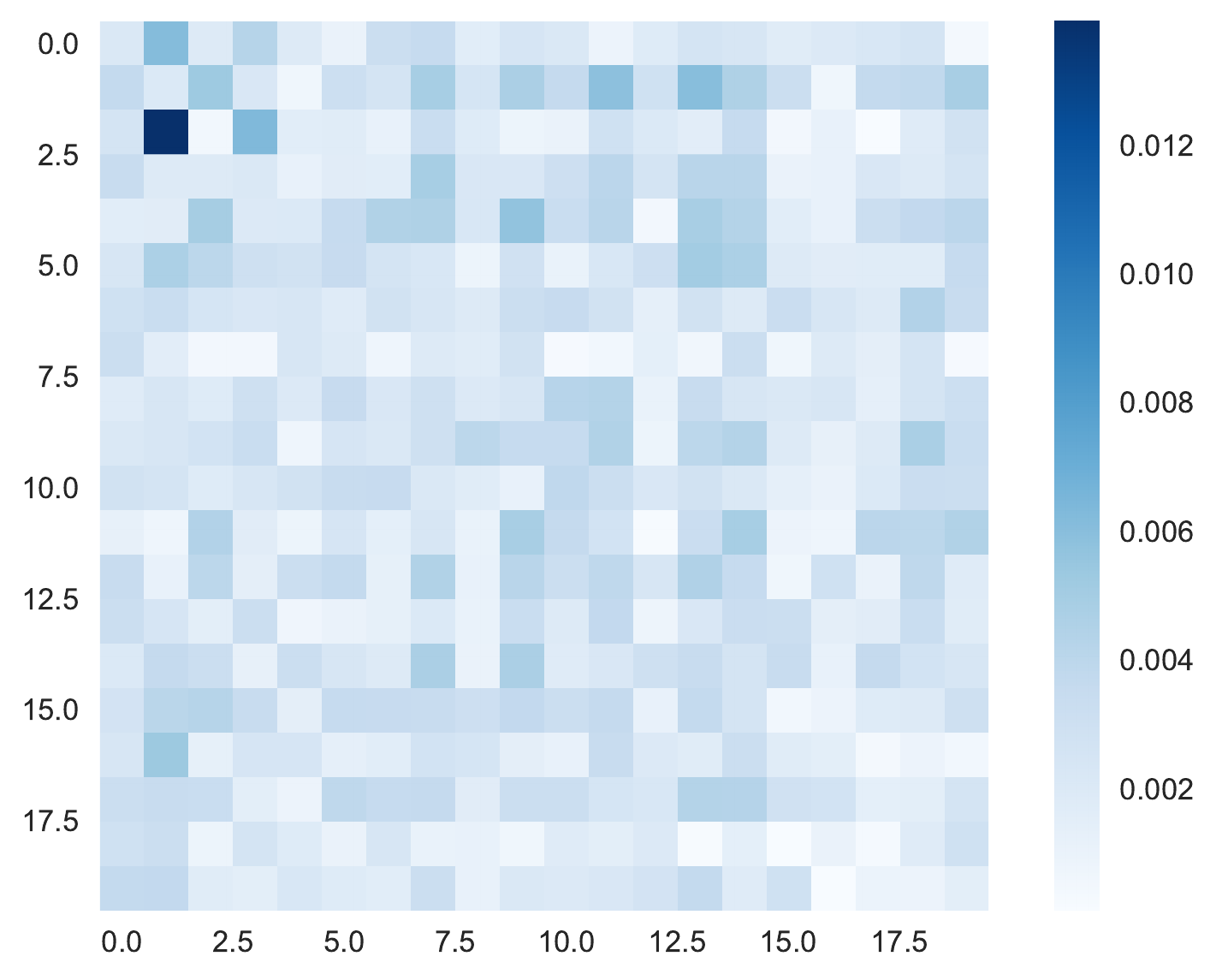}
}%
\subfigure[$\pi_{\text{RIOT}}$]{%
\label{fig:relax_pi}%
\includegraphics[width=0.25\columnwidth]{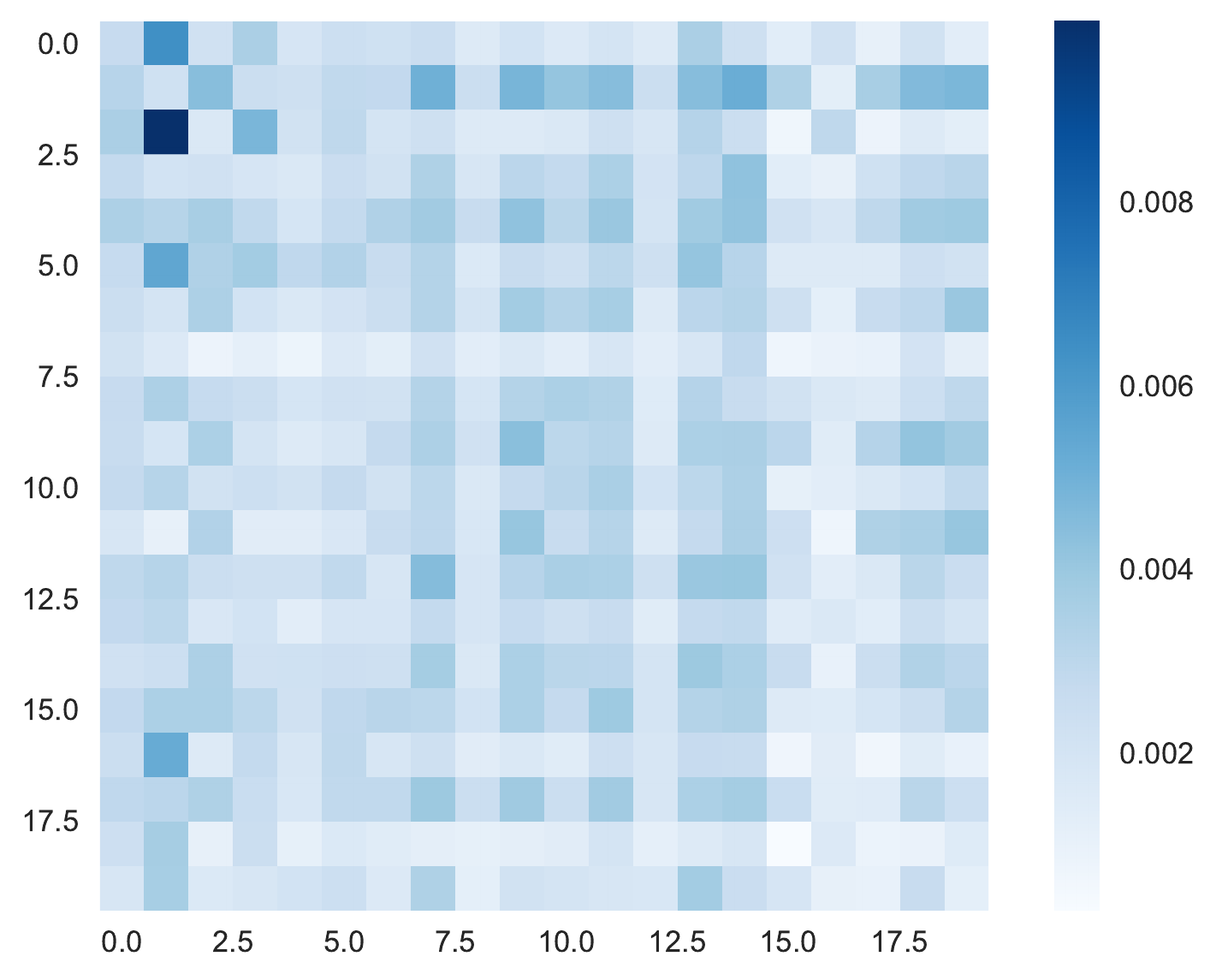}
}%
\subfigure[$\pi_{\text{IOT}}$]{%
\label{fig:fix_pi}%
\includegraphics[width=0.25\columnwidth]{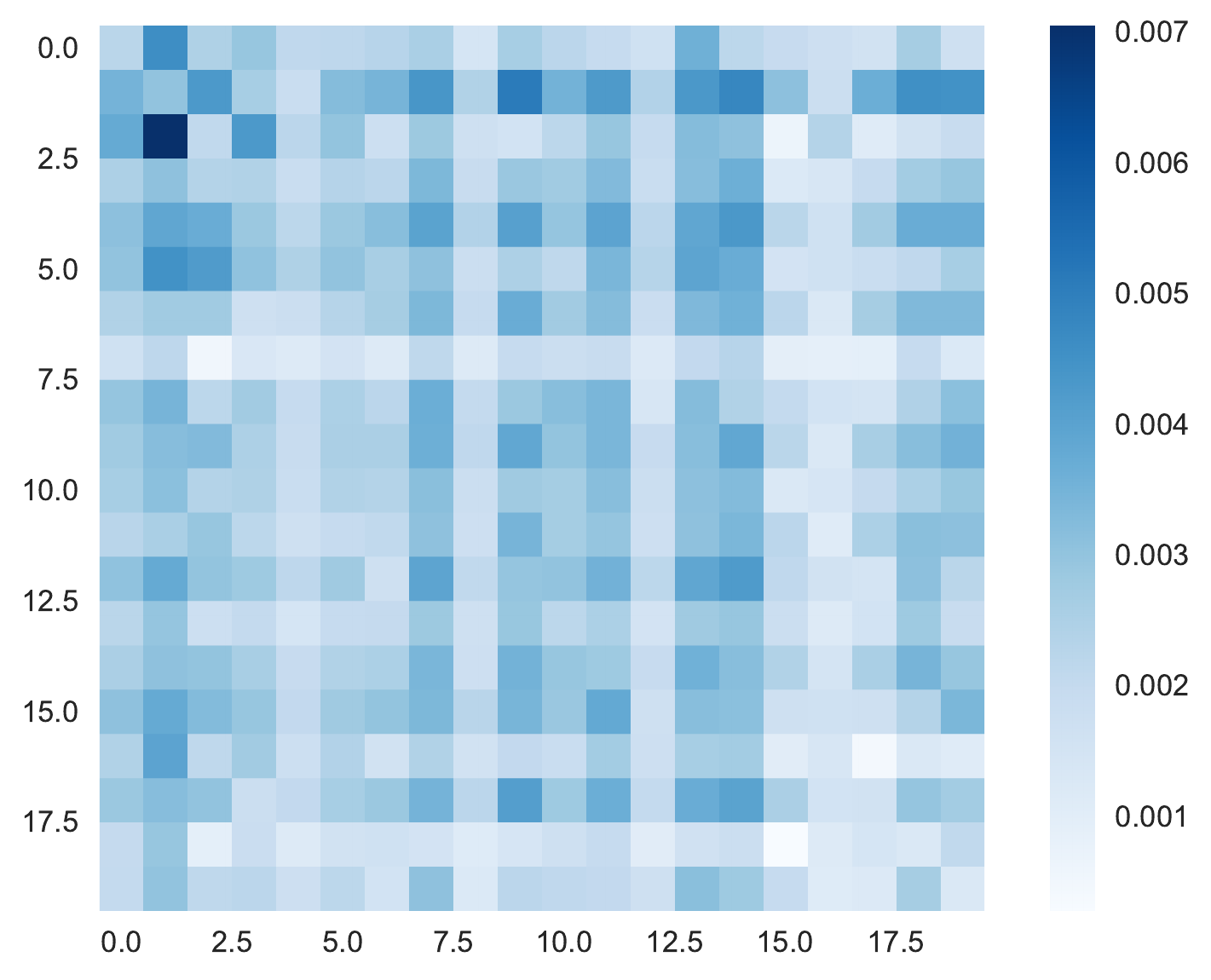}
}%
\caption{Comparison of (a) actual matching matrix $\pi_0$ (b) noised matching matrix $\hat{\pi}$ ($\text{KL}(\pi_0\|\hat{\pi}) = 0.1854$) (c) matching matrix $\pi_{\text{relax}}$ learned by marginal relaxation approach ($\text{KL}(\pi_0\|\pi_{\text{RIOT}}) = 0.1365$) (d) matching matrix $\pi_{\text{fix}}$ learned with fixed marginal approach ($\text{KL}(\pi_0\|\pi_{\text{IOT}}) = 0.1476$) \, (noise size $\sigma=8\times 10^{-3}$)}\label{fig:compare2}
\end{figure}

\subsubsection{Improved Robustness}
To produce Figure \ref{fig:compare1}, we set the number of iterations in outer loop $L=50$, learning rate $s=10$. For each $\sigma \in 10^{-4}\times\{ 1, 5, 10, 50, 10, 500, 1000, 5000\}$ we run algorithm \ref{alg:main} for $L=50$ iterations and record Kullback-Leibler divergence between learned matching matrix $\pi_{\text{IOT}}$, $\pi_{\text{RIOT}}$ and ground truth matching matrix $\pi_0$. Figure \ref{fig:compare1} shows that RIOT with different relaxation parameters demonstrate improved robustness than IOT does when noise size is large. If $\delta$ is set too large ($\delta = 0.05$), however, the entropy term tends to dominate and negatively affect recovery performance when noise size is modestly small. If $\delta$ is tuned carefully ($\delta= 0.01, 0.001$), RIOT can achieve comparable performance even when noise size is quite small. Moreover, we observe that curves corresponding to different $\delta$ intersect with the curve of fixed marginal at different noise size. Therefore, when prior knowledge or statistical estimate of noise size is available, we may tune $\delta$ accordingly to achieve best practical performance.

To produce Figure \ref{fig:compare2}, set noise size $\sigma=8\times10^{-3}$ and relaxation parameter $\delta=0.01$ with other parameters same as those for producing Figure \ref{fig:compare1}. Figure \ref{fig:compare2} visually illustrates $\pi_0$, $\hat{\pi}$, $\pi_{\text{RIOT}}$ and $\pi_{\text{IOT}}$ and we see that noise of this size, significantly corrupts the ground truth matching matrix. $\pi_{\text{RIOT}}$ exhibits less distortion compared to $\pi_{\text{IOT}}$, which demonstrates improved robustness again. Numerical results also back up our observation.
\[\text{KL}(\pi_0\|\hat{\pi}) = 0.1854, \, \text{KL}(\pi_0\|\pi_{\text{RIOT}}) = 0.1365, \, \text{KL}(\pi_0\|\pi_{\text{IOT}}) = 0.1476\]
Compared to IOT, marginal relaxation via regularized Wasserstein distance does help improve the robustness of solution.

\subsubsection{Superior Learning Performance} 
To produce Figure \ref{fig:compare3}, set noise size $\sigma=0.08$, relaxation parameter $\delta=0.001$,  the number of iterations in outer loop $L=100$ and learning rate $s=1$, we then run algorithm \ref{alg:main} to compare the performance of learning cost matrix $C_0$. To avoid non-uniqueness/non-identifiability issue, we use
\[ 
d(C_1, C_2) = \min_{D = \bs{a}\bs{1}^T + \bs{1}\bs{b}T + C_1} \| D - C_2)\|_F
\]
to measure the closeness of cost matrices $C_1$ and $C_2$ and denote the minimizer of $d(C, C_0)$ by $\tilde{C}$. The results are shown below,
\[
d(C_{RIOT}, C_0) = 4.7075, \quad d(C_{IOT}, C_0) = 8.4623
\]
where $C_{RIOT}, C_{IOT}$ are cost matrices learned by RIOT formulation and IOT formulation respectively. Compared to $C_{IOT}$, $C_{RIOT}$ learned via our proposed method almost halves the distance to ground truth  cost matrix. Figure \ref{fig:compare3} 
also illustrates that our model can learn the structure of cost matrix better than IOT does. Our approach improves the learning performance and is able to reveal the structure of ground truth cost matrix.\\

\noindent To sum up, we show that with appropriately tuned relaxation parameter, RIOT is superior to IOT in terms of both robustness and learning performance.


\begin{figure}[ht]
\centering
\subfigure[$C_0$]{%
\label{fig:actual_cost}%
\includegraphics[width=0.3\columnwidth]{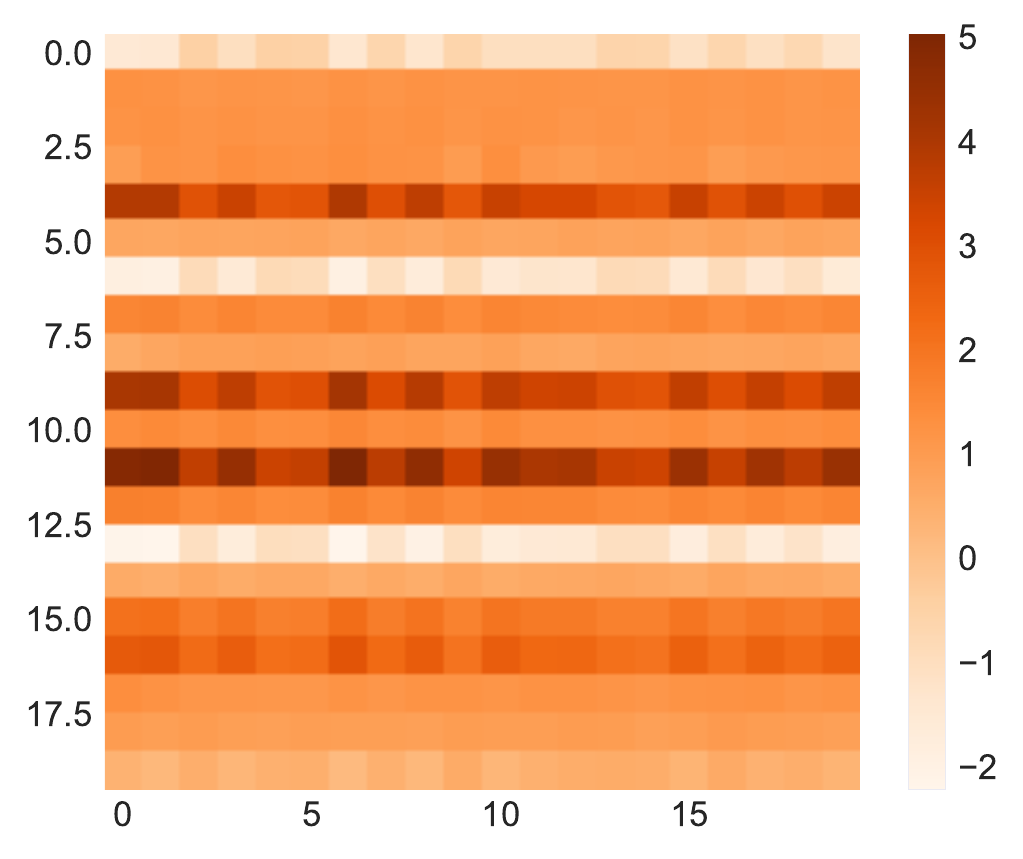}}%
\subfigure[$\tilde{C}_{\text{RIOT}}$]{%
\label{fig:relax_cost}%
\hspace{0.04\columnwidth}
\includegraphics[width=0.3\columnwidth]{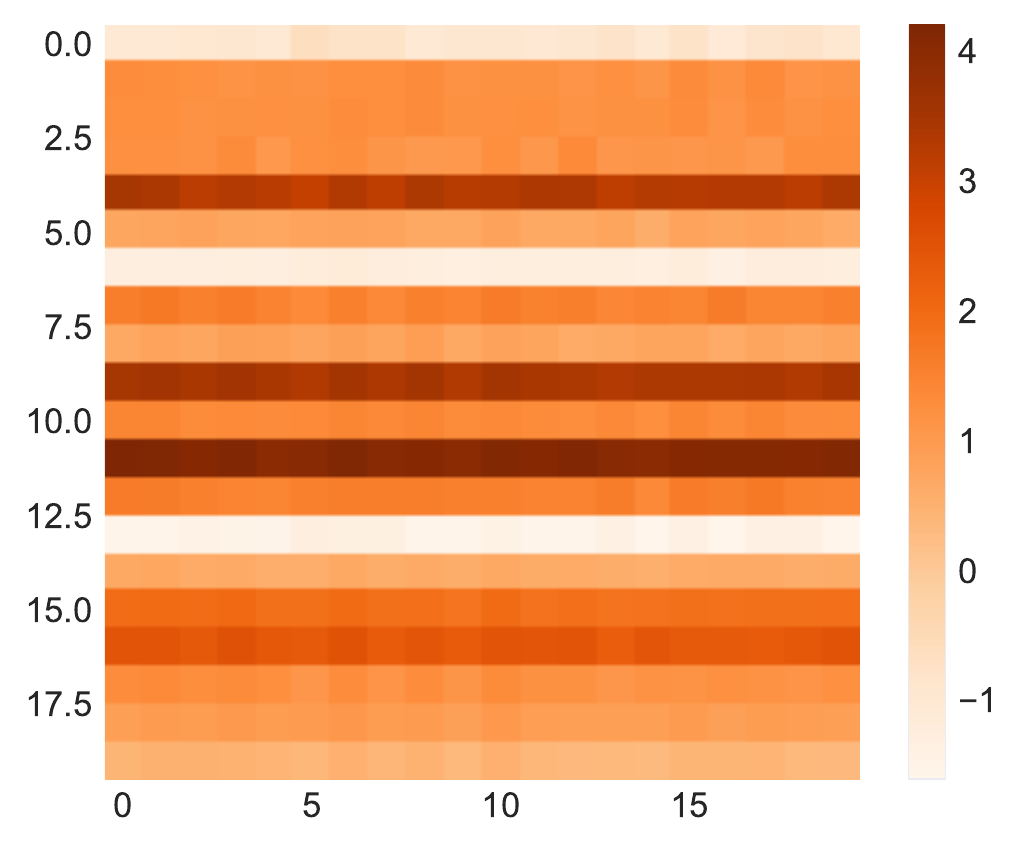}}%
\subfigure[$\tilde{C}_{\text{IOT}}$]{%
\label{fig:fix_cost}%
\hspace{0.04\columnwidth}
\includegraphics[width=0.3\columnwidth]{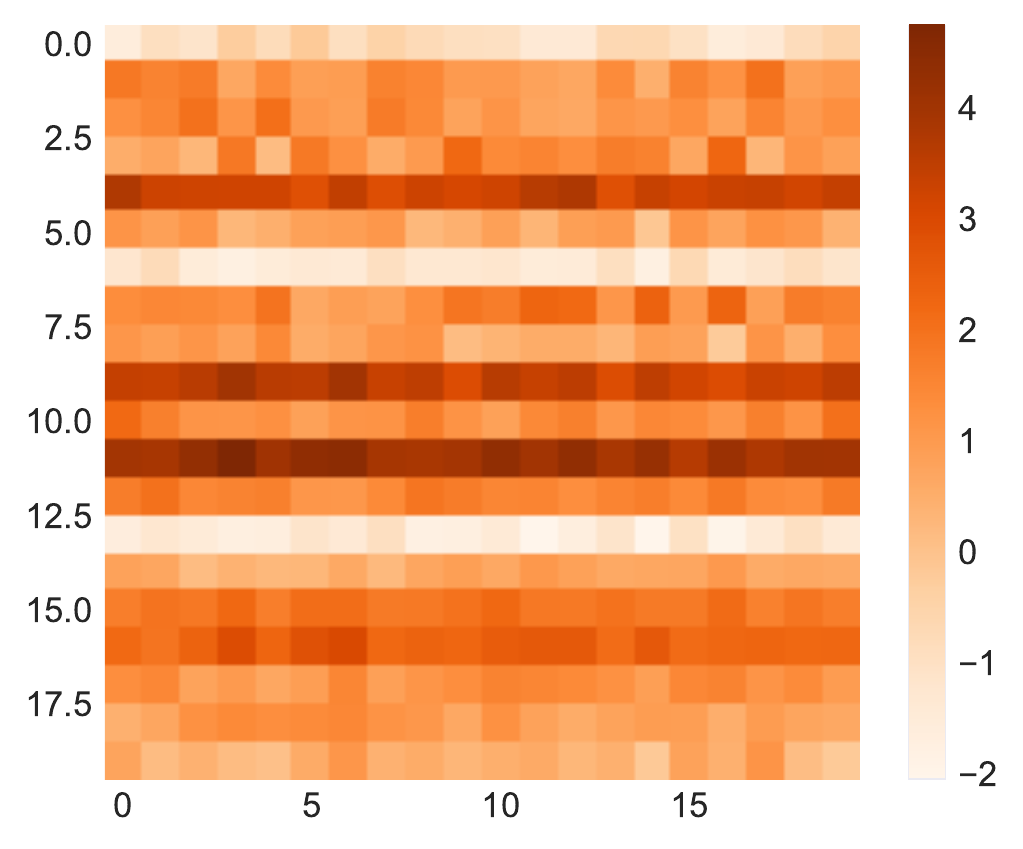}}%
\caption{Comparison of (a) ground truth cost matrix $C_0$ (b) $\tilde{C}_{\text{RIOT}}$, the minimizer of $d(C_{RIOT}, C_0)$ and (c) $\tilde{C}_{\text{IOT}}$, the minimizer of $d(C_{IOT}, C_0)$}\label{fig:compare3}
\end{figure}

\begin{figure}[ht]
\centering
\includegraphics[width=1\columnwidth]{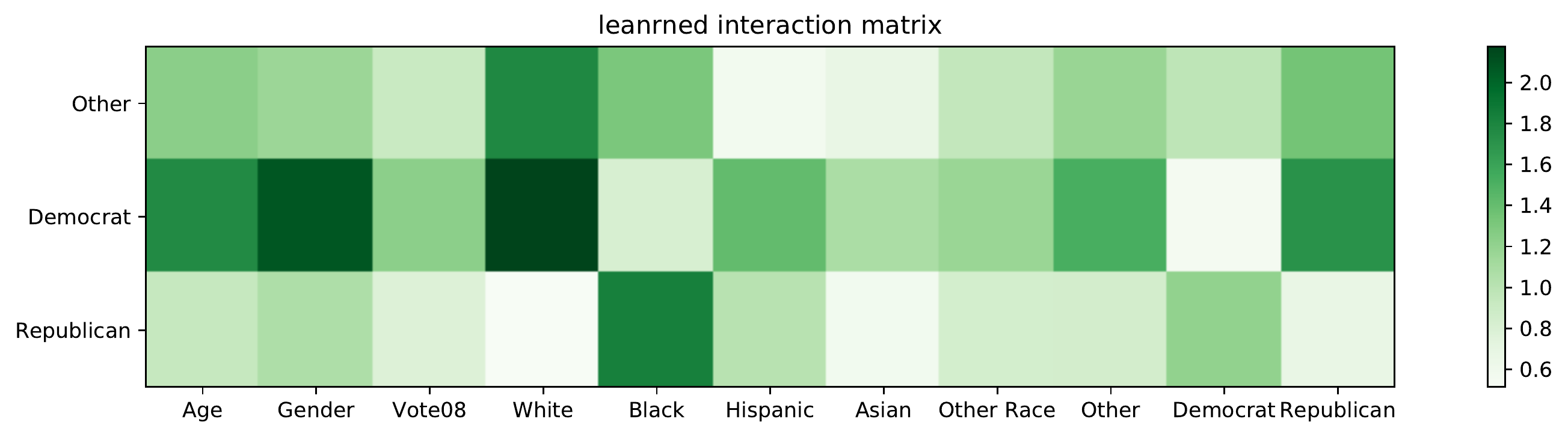}
\caption{Interaction matrix learned by RIOT for election data set}\label{fig:election}
\end{figure}

\subsection{Election data set}
We show in this subsection that RIOT can effectively learn the user-item interaction by applying it to 2012 presidential election data of Florida. The experiment setup is similar to that of \citet{muzellec2017tsallis}\footnote{part of the experiment in this subsection is based on the code kindly shared by Boris Muzellec(https://github.com/BorisMuzellec/TROT)}. 

The data set contains more than $9.23\times 10^{6}$ voters profile, each voter has features like gender, age, race, party and whether the voter voted in 2008 election. 0-1 encoding are used for gender (M:1, F:0) and voting in 2008 (Yes:1, No:0), age is linearly mapped onto $[0,1]$  and we use one-hot encoding for both race and party. We obtain the empirical matching data from exit poll provided by GALLUP\footnote{http://news.gallup.com/poll/160373/democrats-racially-diverse-republicans-mostly-white.aspx}. The empirical matching matrix $\hat{\pi}$ is 3-by-5 matrix, candidates are either Democratic or Republican,  or from a third party and voters are categorized into five races (White, Black, Hispanic, Asian and Other). We use mean profile as features for each race and one-hot encoding as features for candidates. We set $\lambda=1$, for $C_u, C_v$, we randomly generate $m$ and $n$ points from $\mathcal{N}(\bs{0}, 5I_2)$ on plane and use their Euclidean distance matrix as $C_u$ and $C_v$ 
We run RIOT for this data set and the learned interaction matrix is shown in Figure \ref{fig:election}.

In Figure \ref{fig:election}, the lighter the color of a cell is, the lower the cost caused by that feature combination is. Take `Age' column as an example, (`Democratic', `Age') is the darkest and (`Republican', `Age') is the lightest among the column, it means that elder voters are more likely to favor Republican candidate as it has lower cost compared to supporting Democratic candidate. Other cells can be interpreted in a similar manner.

From Figure \ref{fig:election}, we see that most white voters tend to support Republican candidate Romney while black voters tend to support Democratic candidate Obama, Democratic and Republican voters tend to support candidate from their own party, elder voters tend to support Romney while female voters tend to support Obama. All above observations are consistent with CNN's\footnote{http://www.cnn.com/election/2012/results/state/FL/president/} exit polls. This demonstrates that RIOT can learn meaningful interaction preference from empirical matching effectively.

\subsection{New York Taxi data set}
We demonstrate in this subsection that the proposed RIOT framework is able to predict fairly accurate matching on New York Taxi data set \footnote{https://www.kaggle.com/c/nyc-taxi-trip-duration/data}. This data set contains 1458644 taxi trip records from January to June in 2016 in New York city. Each trip record is associated with one of the two data vendors (Creative Mobile Technologies, LLC and VeriFone Inc.) and contains detailed trip information such as pickup/drop-off time, longitude, latitude and so on. As no unique identifiers of taxis are provided, we can not predict new matching on individual level. Instead, we predict matching between data vendors and passengers (a passenger is matched with one of the data vendors if a taxi associated with that data vendor rides with the passenger). 

To reflect the proximity of passengers and taxis, we cluster all trip records into 50 regions and plot them in Figure \ref{fig:cluster}. If a passenger and a taxi are in the same region, it indicates they are close to each other and it is desirable to match them up. Further, since we do not have real-time location of taxis, we use the last known drop-off location as taxis' current location. This assumption is usually not true in large time scale as taxis are likely to leave the region and search for next passenger. To alleviate this issue, we only use trip records within a short time period, 6:00-6:30pm on Friday, June 3rd to predict matching of 6:00-6:30pm on Friday, June 10. Moreover, this is typically the rush hour in New York city and location of taxis are not likely to change dramatically during the period. Vendors' features are the distribution of associated taxis across 50 regions, i.e., $U \in \mathbb{R}^{50 \times 2}$, passengers' features are simply the one-hot encoding of their current location, i.e, $V \in \mathbb{R}^{50 \times 50}$. So the interaction matrix $A \in \mathbb{R}^{50 \times 50}$. We set $r = 1$, $\lambda=1$ and randomly generate $m$ and $n$ points from $\mathcal{N}(\bs{0}, 5I_2)$ on plane and use their Euclidean distance matrix as $C_u$ and $C_v$.

The comparison of the actual matching $\pi_{new}$ and the predicted matching $\pi_{predicted}$ is shown in Figure \ref{fig:taxi}. Visually speaking, we see that the predicted matching is able to capture the pattern of actual empirical matching and the prediction is fairly accurate. Quantitative result is also reported, measured in Kullback-Leibler divergence 
\[
\text{KL}(\pi_{new}||\pi_{predicted}) = 0.1659.
\]

\begin{figure}[ht]
\centering
\includegraphics[width=1.1\columnwidth]{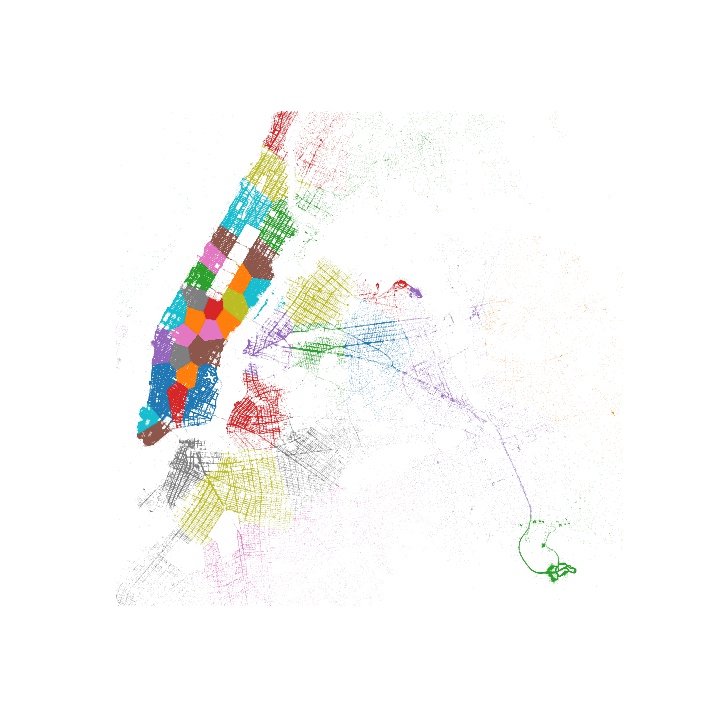}
\caption{Clusters of Taxi Trip Records}\label{fig:cluster}
\end{figure}

\begin{figure}[ht]
\centering
\includegraphics[width=1\columnwidth]{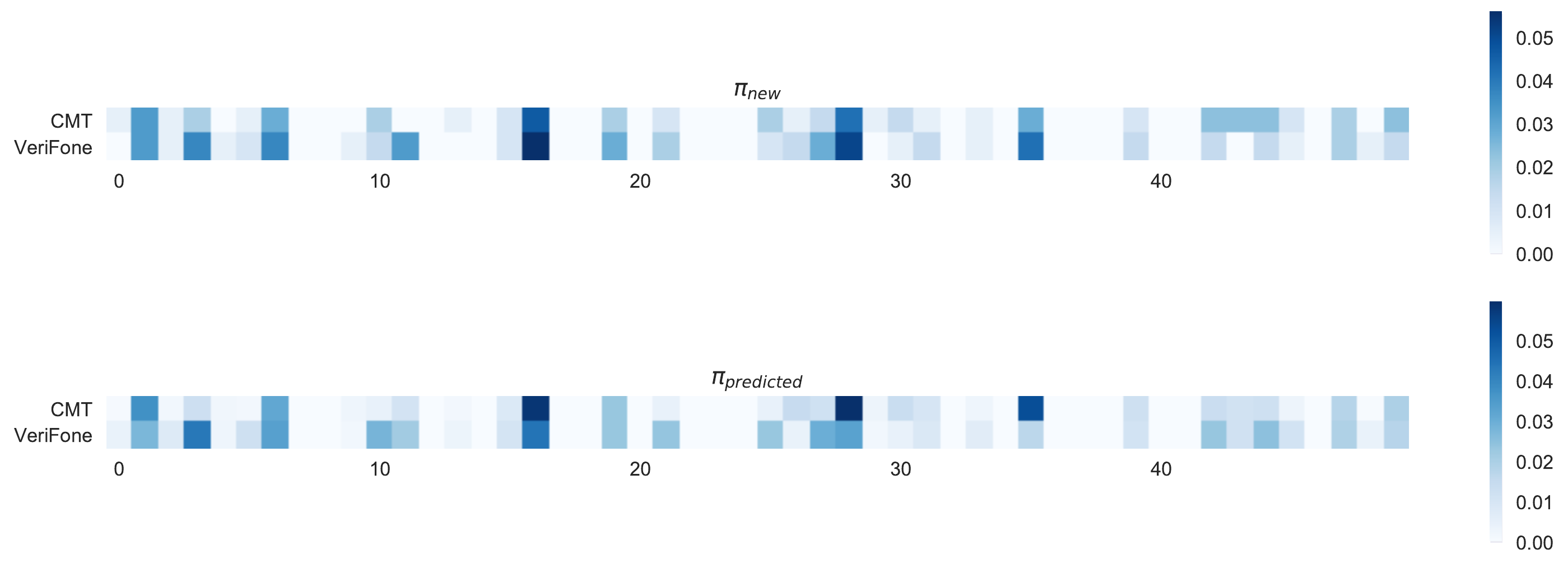}
\caption{Comparison between actual matching $\pi_{new}$ (top) and predicted matching $\pi_{predicted}$ (bottom) between 6:00-6:30pm on Friday, June 10th in New York city. Ticks of $x$-axis are labels of regions.}\label{fig:taxi}
\end{figure}

\subsection{Marriage data set}
In this subsection, we illustrate the applicability of our model in suggesting new matching and it can make more accurate and realistic recommendations than conventional recommender systems do. Once the interaction between two sides of matching market is learned, one may use that to predict matching for new groups and make recommendations accordingly. We compare our RIOT with baseline random predictor model (Random), classical SVD model \citep{koren2009matrix} and item-based collaborative filtering model (itemKNN) \citep{cremonesi2010performance}, probabilistic matrix factorization model (PMF) \citep{mnih2008probabilistic} and the state-of-art factorization machine model (FM) \citep{rendle2012factorization}. To fit conventional recommender systems in our setting, one possible approach is simply treating each cell of matching matrix as rating and ignoring the underlying matching mechanism. In RIOT, we set $\lambda=\lambda_u=\lambda_v=1$, relaxation parameter $\delta=0.001$ and use polynomial kernel $k(\bs{x},\bs{y})=(0.2\bs{x}^T\bs{y} + 0.8)^2$. 

We evaluate all models on Dutch Household Survey (DHS) data set \footnote{https://www.dhsdata.nl/site/users/login} from 2005 to 2014 excluding 2008 (due to data field inconsistency). After data cleaning, the data set consists of 2475 pairs of couple. For each person we extract 11 features including education level, height, weight, health 
and 6 characteristic traits, namely irresponsible, accurate, ever-ready, disciplined, ordered, clumsy and detail-oriented. Education levels are first categorized into elementary, middle and high and then mapped linearly onto $[0,1]$. Height and weight are normalized by dividing the largest height/weight. Health and characteristic features are measured on 0-5 scale and rescaled onto $[0,1]$. We use k-means algorithm to cluster men/women into $n_{\text{cluster}}=50$ groups, respectively. We select each cluster center as representative. Performing clustering can better illustrate the applicability of our model in distributional setting and also helps reduce problem size. We train all models on training data set and measure error between predicted and test matching matrix by root mean square error (RMSE) and mean absolute error (MAE) using 5-fold cross-validation. The result is shown in Table \ref{table:error}.

\begin{table}[h]
\centering
\begin{tabular}{|c|c|c|c|c|c|c|c|c|}
\hline
     & Random  & PMF     & SVD  & itemKNN  & RIOT               & FM      \\ \hline
RMSE & 54.5 & 8.4 & 29.9  & 2.4 &  \textbf{2.3} & 3.6 \\ \hline
MAE  & 36.6 & 2.0 & 16.8  & 1.6 &  \textbf{1.5}          & 2.8 \\ \hline
\end{tabular}
\caption{Average error of 5-fold cross-validation measured in RMSE and MAE $(\times10^{-4})$}\label{table:error}
\end{table}


In both measures,  RIOT beats other conventional RS competitors. The comparison clearly shows that being able to take supply limitation into consideration and capture matching mechanism is of critical importance in suggest matching in such context and our proposed RIOT model can do a better job than conventional recommender systems do.

\section{Conclusion} \label{conclusion}
In this paper, we develop a novel, unified, data-driven inverse-optimal-transport-based matching framework RIOT which can learn adaptive, nonlinear interaction preference from noisy/incomplete empirical matching matrix in various matching contexts. The proposed RIOT is shown to be more robust than the state of the art IOT formulation and exhibits better performance in learning cost. Moreover, our framework can be extended to make recommendations based on predicted matching and outperforms conventional recommender systems in matching context.

In the future, our work can be continued in multiple ways. First, our model does batch prediction for a group of users and items and we would like to develop online algorithm to deal with streaming data and make matching suggestion for a previous unseen user/item in an online fashion. A recent method proposed by \citet{perrot2016mapping} that allows to update the plan using out-of-sample data without recomputing might be useful. From business standpoint, we may study optimal pricing within our framework, i.e., how to set a reasonable price and adjust item distribution in a most profitable way \citep{azaria2013movie}. In addition, we hope to combine impressive expressiveness of deep neural networks to further boost the performance of our proposed model.







\newpage
\appendix
\renewcommand{\thesection}{\Alph{section}}
\section{Proof of Lemma \ref{lemma:1}} \label{app:1}
\begin{proof}
Denote $U^\star(\bs{\mu}, \bs{\nu}) = \{\pi| \pi\bs{1} = \bs{\mu}, \pi^T\bs{1} = \bs{\nu}\}$. It is easily seen that $U(\bs{\mu}, \bs{\nu}) \subset U^\star(\bs{\mu}, \bs{\nu})$ and 
\[
    \min_{\substack{\pi_1 \in U(\bs{\mu}_1, \bs{\nu}_1)\\ \pi_2 \in U(\bs{\mu}_2, \bs{\nu}_2)}} \frac{1}{2}\|\pi_1 - \pi_2\|_F^2 \ge \min_{\substack{\pi_1 \in U^\star(\bs{\mu}_1, \bs{\nu}_1)\\ \pi_2 \in U^\star(\bs{\mu}_2, \bs{\nu}_2)}} \frac{1}{2}\|\pi_1 - \pi_2\|_F^2
\]
Consider the Lagrangian function of right hand side minimization problem 
\begin{equation*}
\begin{split}
L(\pi_1, \pi_2, \bs{\lambda}_1, \bs{\mu}_1, \bs{\lambda}_2, \bs{\mu}_2) &= \frac{1}{2}\|\pi_1 - \pi_2\|_F^2 \\
&- \bs{\lambda}_1^T(\pi_1\bs{1} - \bs{\mu}_1) - \bs{\mu}_1^T(\pi_1^T\bs{1} - \bs{\nu}_1)\\
&- \bs{\lambda}_2^T(\pi_2\bs{1} - \bs{\mu}_2) - \bs{\mu}_2^T(\pi_2^T\bs{1} - \bs{\nu}_2)
\end{split}
\end{equation*}
The KKT condition is
\[
\begin{cases}
    \frac{\partial L}{\partial \pi_1} = (\pi_1 - \pi_2) - \bs{\lambda}_1 \bs{1}^T - \bs{1} \bs{\mu}_1^T = 0 \\
    \frac{\partial L}{\partial \pi_2} = (\pi_2 - \pi_1) - \bs{\lambda}_2 \bs{1}^T - \bs{1} \bs{\mu}_2^T = 0 \\
    \frac{\partial L}{\partial \bs{\lambda}_1} = \pi_1\bs{1} - \bs{\mu}_1 = 0 \\
    \frac{\partial L}{\partial \bs{\mu}_1} = \pi_1^T\bs{1} - \bs{\nu}_1 = 0 \\
    \frac{\partial L}{\partial \bs{\lambda}_2} = \pi_2\bs{1} - \bs{\mu}_2 = 0 \\
    \frac{\partial L}{\partial \bs{\mu}_2} = \pi_2^T\bs{1} - \bs{\nu}_2 = 0 \\
\end{cases}
\]
By solving KKT condition, we have $\bs{\lambda}_1 = \frac{\Delta \bs{\mu}}{n} + x \bs{1}$ and $\bs{\mu}_1 = \frac{\Delta \bs{\nu}}{m} + y \bs{1}$ where $x = \frac{\bs{\lambda}_1^T\bs{1}}{m}$ and $y = \frac{\bs{\mu}_1^T\bs{1}}{n}$ and 
\begin{align*}
    \|\pi_1 - \pi_2\|_F^2 &= \|\bs{\lambda}_1 \bs{1}^T + \bs{1}\bs{\mu}_1^T\|_F^2 \\
    &= \|\frac{1}{n} \Delta \bs{\mu} \bs{1}^T + \frac{1}{m} \bs{1} \Delta \bs{\nu}^T + z\bs{1}\bs{1}^T\|_F^2 \qquad (z = x + y) \\
    &=  \sum_{i=1}^m \sum_{j=1}^{n} (z + \frac{1}{m} \Delta \bs{\nu}_{j} + \frac{1}{n} \Delta \bs{\mu}_{i})^2 \\
    &= \ (mn z^2 + \frac{1}{m}\|\Delta \bs{\nu}\|_2^2 + \frac{1}{n} \|\Delta \bs{\mu}\|_2^2) \\
    &\ge \frac{m\|\Delta \bs{\mu}\|^2_2 + n \|\Delta \bs{\nu}\|_2^2}{mn}
\end{align*}
\end{proof}

\newpage
\section{Proof of Lemma \ref{lemma:2}} \label{app:2}
\begin{proof}
\[ 
f(\bs{a}, \bs{b}) = \bs{x}^T A \bs{x}  - 2\bs{f}^T \bs{x} + \|M\|^2_F
\]
where $\bs{x} =  \begin{bmatrix}\bs{a} \\ \bs{b}\end{bmatrix}$, $\bs{f} = [(M \bs{1})^T, \bs{1}^TM]^T$  and $A = \begin{bmatrix} 
n I_{m\times m} & \bs{1}_m \bs{1}_n^T \\
\bs{1}_n \bs{1}_m^T & mI_{n \times n}
\end{bmatrix}$.
Note that $A$ is a positive semi-definite matrix, the algebraic multiplicity of its $0$ eigenvalue is $1$ and $\mbox{null}(A) = \mbox{span}\{ [\bs{1}_m^T, -\bs{1}_n^T]^T\}$. Moreover, $\bs{f} \perp \mbox{null}(A)$, hence the quadratic form $f(\bs{a}, \bs{b})$ admits a minimum in $\mbox{null}(A)^\perp$ and it is straightforward to obtain 

\begin{align*}
\min_{\bs{a}, \bs{b}} f(\bs{a}, \bs{b}) &= \min_{\bs{x}}\bs{x}^T A \bs{x}  - 2\bs{f}^T \bs{x} + \|M\|^2_F \\
&= \|M\|^2_F - \bs{f}^T A^+ \bs{f}
\end{align*}
where $A^+$ is the Moore-Penrose inverse of matrix $A$. 
If $M$ can not be written as $M = \bs{a}\bs{1}^T + \bs{1}\bs{b}^T$, then the minimum of $f(\bs{a}, \bs{b})$ must be positive, hence
\[ 
f(\bs{a}, \bs{b}) \ge \min_{\bs{a},\bs{b}}f(\bs{a}, \bs{b}) = \|M\|^2_F - \bs{f}^T A^+ \bs{f} > 0
\].



\end{proof}

\newpage
\section{Extension of RIOT model to learn $C_u$ and $C_v$ jointly}\label{app:extension}
In this section, we extend proposed RIOT model to settings where $C_u$ and $C_v$ are unknown and need to be learned jointly with the main cost matrix $C(A)$. 
Following the same derivation, we end up with an optimization problem almost identical to the one in equation \eqref{eq:primal}, i.e.,

\begin{equation}\label{eq:extension_primal}
\min_{A, \bs{\mu}\in \Sigma_m, \bs{\nu}\in\Sigma_n, C_u, C_v} -\sum_{i=1}^m \sum_{j=1}^n \hat{\pi}_{ij}\log\pi_{ij}+  \delta \big(d_{\lambda_u}(C_u, \bs{\mu}, \hat{\bs{\mu}}) + d_{\lambda_v}(C_v, \bs{\nu}, \hat{\bs{\nu}})\big)
\end{equation}
except for the fact that now we need to optimize two additional variables $C_u$ and $C_v$. Genericly, inverse problems are usually not well-posed, in our cases, if no constraints imposed on $C_u, C_v$, one could trivially let, say $C_u = 0_{m\times m}, C_v = 0_{n \times n}$. To avoid such ill-posedness, we assume that $C_u \in \mathcal{M}^m \cap \Sigma^{m\times m}, C_v \in \mathcal{M}^n \cap \Sigma^{n \times n}$, where $\mathcal{M}^d$ is the cone of $d\times d$ distance matrix  and $\Sigma^d$ is $d-1$ simplex (see section \ref{background} for definition). Other regularization can also be explored.

By strong duality, we may convert equation \eqref{eq:extension_primal} to its dual problem in a similar fashion as equation \eqref{eq:dual},

\begin{equation*}
\min_{A,\bs{\mu},\bs{\nu}, C_u, C_v}\max_{\bs{z},\bs{w}} -\sum_{i=1}^m \sum_{j=1}^n \hat{\pi}_{ij}\log\pi_{ij} + \delta \big( \langle \bs{z},\bs{\mu}\rangle + \langle \bs{z}^{C_u},\hat{\bs{\mu}}\rangle
+\langle \bs{w},\bs{\nu}\rangle + \langle \bs{w}^{C_v},\hat{\bs{\nu}}\rangle \big)
\end{equation*}
where $z^{C_u}_j = \frac{1}{\lambda_u}\log \hat{r}_j-\frac{1}{\lambda_u} \log(\sum_{i=1}^m e^{\lambda_u(z_i - {C_u}_{ij})})$ and $w^{C_v}_j = \frac{1}{\lambda_v}\log \hat{c}_j-\frac{1}{\lambda_v} \log(\sum_{i=1}^n e^{\lambda_v(w_i - {C_v}_{ij})})$. 

One way to solve eqution \eqref{eq:extension_dual}, without too many changes of proposed algorithm in section \ref{derivation}, is to rewrite it as 

\begin{equation}\label{eq:extension_dual}
\min_{A,\bs{\mu},\bs{\nu}} \min_{C_u, C_v}\max_{\bs{z},\bs{w}} -\sum_{i=1}^m \sum_{j=1}^n \hat{\pi}_{ij}\log\pi_{ij} + \delta \big( \langle \bs{z},\bs{\mu}\rangle + \langle \bs{z}^{C_u},\hat{\bs{\mu}}\rangle
+\langle \bs{w},\bs{\nu}\rangle + \langle \bs{w}^{C_v},\hat{\bs{\nu}}\rangle \big)
\end{equation}
and alternatively update three groups of variables $(A, \bs{\mu}, \bs{\nu})$, $(C_u, C_v)$ and $(\bs{z}, \bs{w})$.

\subsection{Update $(A, \bs{\mu}, \bs{\nu})$, with $(C_u, C_v)$ and $(\bs{z}, \bs{w})$ fixed}
Once $(C_u, C_v)$ and $(\bs{z}, \bs{w})$ fixed, $\bs{z}^{C_u}, \bs{w}^{C_v}$ are fixed as well, hence the optimization problem at this stage becomes
\begin{equation*}
\min_{A,\bs{\mu},\bs{\nu}} -\sum_{i=1}^m \sum_{j=1}^n \hat{\pi}_{ij}\log\pi_{ij} + \delta \big( \langle \bs{z},\bs{\mu}\rangle + \langle \bs{w},\bs{\nu}\rangle \big)
\end{equation*}
where constants are omitted. This minimization problem is identical to that in subsection \ref{subsection:update1}. Please see detailed update scheme for $(A, \bs{\mu}, \bs{\nu})$ there.

\subsection{Update $(C_u, C_v)$, with $(A, \bs{\mu}, \bs{\nu})$ and $(\bs{z}, \bs{w})$ fixed}
If $(A, \bs{\mu}, \bs{\nu})$ and $(\bs{z}, \bs{w})$  are fixed, $\pi$ is also fixed as it is the regularized OT plan determined by parameters $(A, \bs{\mu}, \bs{\nu})$, hence the optimization problem at this stage becomes
\begin{equation*}
\min_{C_u \in \mathcal{M}^m \cap \Sigma^{m \times m}, C_v \in \mathcal{M}^n \cap \Sigma^{n \times n}}  \delta \big(\langle \bs{z}^{C_u},\hat{\bs{\mu}}\rangle + \langle \bs{w}^{C_v},\hat{\bs{\nu}}\rangle \big)
\end{equation*}
and can be further splitted into two independent optimization problems
\begin{equation}\label{eq:extension_seperate}
\min_{C_u \in \mathcal{M}^m \cap \Sigma^{m \times m}}  \delta \langle \bs{z}^{C_u},\hat{\bs{\mu}}\rangle, \qquad \min_{C_v \in \mathcal{M}^n \cap \Sigma^{n \times n}}  \delta \langle \bs{w}^{C_v},\hat{\bs{\nu}}\rangle 
\end{equation}
which can be solved simultaneously.

Both $\mathcal{M}^d$ and  $\Sigma^{d \times d}$ are convex sets, so is their intersection. Therefore we can perform projected gradient method to solve two seperate minimization problems in equation \eqref{eq:extension_seperate}.
\subsection{Update $(\bs{z}, \bs{w})$, with $(A, \bs{\mu}, \bs{\nu})$ and $(C_u, C_v)$ fixed}
When $(A,\bs{\mu}, \bs{\nu})$ and $(C_u, C_v)$ are fixed, $\pi$ is also fixed, we then only need to solve 
\[
\max_{\bs{z},\bs{w}}\langle \bs{z},\pi\bs{1}\rangle + \langle \bs{z}^{C_u},\hat{\bs{\mu}}\rangle
+\langle \bs{w},\pi^T\bs{1}\rangle + \langle \bs{w}^{C_v},\hat{\bs{\nu}}\rangle
\]
and one immediately recognizes this is equivalent to applying Sinkhorn-Knopp algorithm to compute
$d_{\lambda_u}(C_u,\bs{\mu}, \hat{\bs{\mu}})$ and $d_{\lambda_v}(C_v,\bs{\nu}, \hat{\bs{\nu}})$.\\

To summarize, to jointly learn $C(A)$, $C_u$ and $C_v$, we formulate an optimization problem similar to that in equation \eqref{eq:primal} and propose an alternating algorithm to solve it by alternately update $(A, \bs{\mu}, \bs{\nu})$, $(C_u, C_v)$ and $(\bs{z}, \bs{w})$. Practically, the update of $(C_u, C_v)$ requires expensive projection onto $\mathcal{M}^d \cap \Sigma^{d \times d}$, therefore we suggest learning $C_u, C_v$ first and then using RIOT formulation to learn the main cost matrix $C(A)$, rather than learning three cost matrices simultaneously.

\newpage
\bibliography{Learn-to-Match.bib}

\end{document}